%% file: main.tex
\documentclass{article}

\usepackage{iclr2027_conference,times}

\usepackage{microtype}
\usepackage{graphicx}
\usepackage{subcaption}
\usepackage{paralist}
\usepackage{booktabs} % for professional tables

\usepackage{hyperref}
\usepackage{bbm}
\usepackage{url}
\usepackage{multirow}
\usepackage{multicol}
\usepackage{enumitem}
\usepackage{algorithm}      
\usepackage{algorithmic} 
\usepackage{latexsym}
\usepackage{siunitx}

\newcommand{\tick}{\checkmark}
\newcommand{\cross}{\(\times\)}
\usepackage{amsmath}
\usepackage{amssymb}
\usepackage{mathtools}
\usepackage{amsthm}

\usepackage[capitalize,noabbrev]{cleveref}

\theoremstyle{plain}
\newtheorem{theorem}{Theorem}[section]
\newtheorem{proposition}[theorem]{Proposition}

\theoremstyle{definition}

\theoremstyle{remark}
\newtheorem{remark}[theorem]{Remark}

\title{Can One-Shot Test-Time Data Augmentation Help with Generalization?}

\author{%
  Yunwei Bai \\
  National University of Singapore \\
  \texttt{baiyunwei@u.nus.edu} \\
  \And
  Yao Shu \\
  Hong Kong University of Science and Technology (Guangzhou) \\
  \And
  Ying Kiat Tan \& Tsuhan Chen \\
  National University of Singapore
}

\iclrfinalcopy

\begin{document}
\maketitle
\lhead{Preprint}

\input{sections/1_introduction}
\input{sections/2_relatedworks}
\input{sections/3_method}
\input{sections/4_theory}
\input{sections/5_experiments}
\clearpage
\bibliography{example_paper}
\bibliographystyle{iclr2027_conference}
\clearpage
\appendix
\input{sections/6_appendix}
\end{document}

%% file: sections/1_introduction.tex
\begin{abstract}
Data augmentation is crucial for model generalization, 
but existing methods are mostly 
centered on the training stage. Test-time augmentation, while 
underexplored, can be practically effective for generalization
while avoiding extra model parameters or fine-tuning. 
Given the increasing training cost and the 
literature gap, we study whether it is possible to perform effective
test-time augmentation using image generation from 
just the single original image. 
We first analyze the importance of 
test-time augmentation, and then design and study a simple 
yet natural operator named 1S-DAug, which comprises geometric perturbations 
with controlled noise injection and image-conditioned denoising. 
We obtain positive results on well-established image-classification
benchmarks across four datasets and multiple models, achieving up
to 20\% relative accuracy improvement without model training or parameter access. 
Code will be released.
\end{abstract}

\section{Introduction}
Data augmentation is crucial, 
but most existing methods focus on the training stage \citep{Liang_2024}, 
which involves considerable computational cost as model sizes continue
to grow \citep{zheng2023cowclip}. At test time,
when each example carries high influence, degradations are especially
harmful, while the model must rely on precise visual cues. 
For test-time augmentation, 
achieving diversity while preserving class-defining content is
therefore central yet non-trivial. Given 
the practical importance and the literature gap, we thus ask the 
\textbf{novel question}:  
How to synthetically augment an image based on just itself? Can the test-time augmentation be effective?

Prior work on test-time augmentation mainly involves traditional
geometric perturbations like rotation and flipping, and  
synthetic test-time data augmentation is underexplored. 
On the one hand, the traditional geometric or photometric 
transforms often add limited new 
information and may degrade image quality \citep{Liang_2024, bai2025fsl}. 
On the other hand, to the best of our knowledge,
we are the \textbf{first} 
to study generative data augmentation under 
a strict test-time context, based on the single input image itself. 
Image-to-image translation with adversarial 
training (e.g., FUNIT, CycleGAN) can be effective in specific domains, 
yet it is prone to training instabilities and to inconsistent quality 
across dissimilar object categories and poses 
\citep{liu2019few,zhu2017unpaired}. Prior attempts to employ GANs for
data augmentation typically restrict their use to training time
\citep{hariharan2017lowshot,schwartz2018deltaencoder,hariharan2017low},  
and \citet{bai2025fsl} 
requires access to training samples, which is 
restricted to same-domain evaluation and has limited applicability across
datasets. Some prior work leverages diffusion-generated
images for training data augmentation. However, the methods rely on 
fine-tuning using text prompts or additional target-class samples 
\citep{Trabucco2024EffectiveDA, He2023IsSyntheticDataReady}, and are thus not
suitable for the one-shot context.

\begin{figure*}
    \centering
    \includegraphics[width=\linewidth, trim={0cm, 2cm, 2cm, 0}, clip]{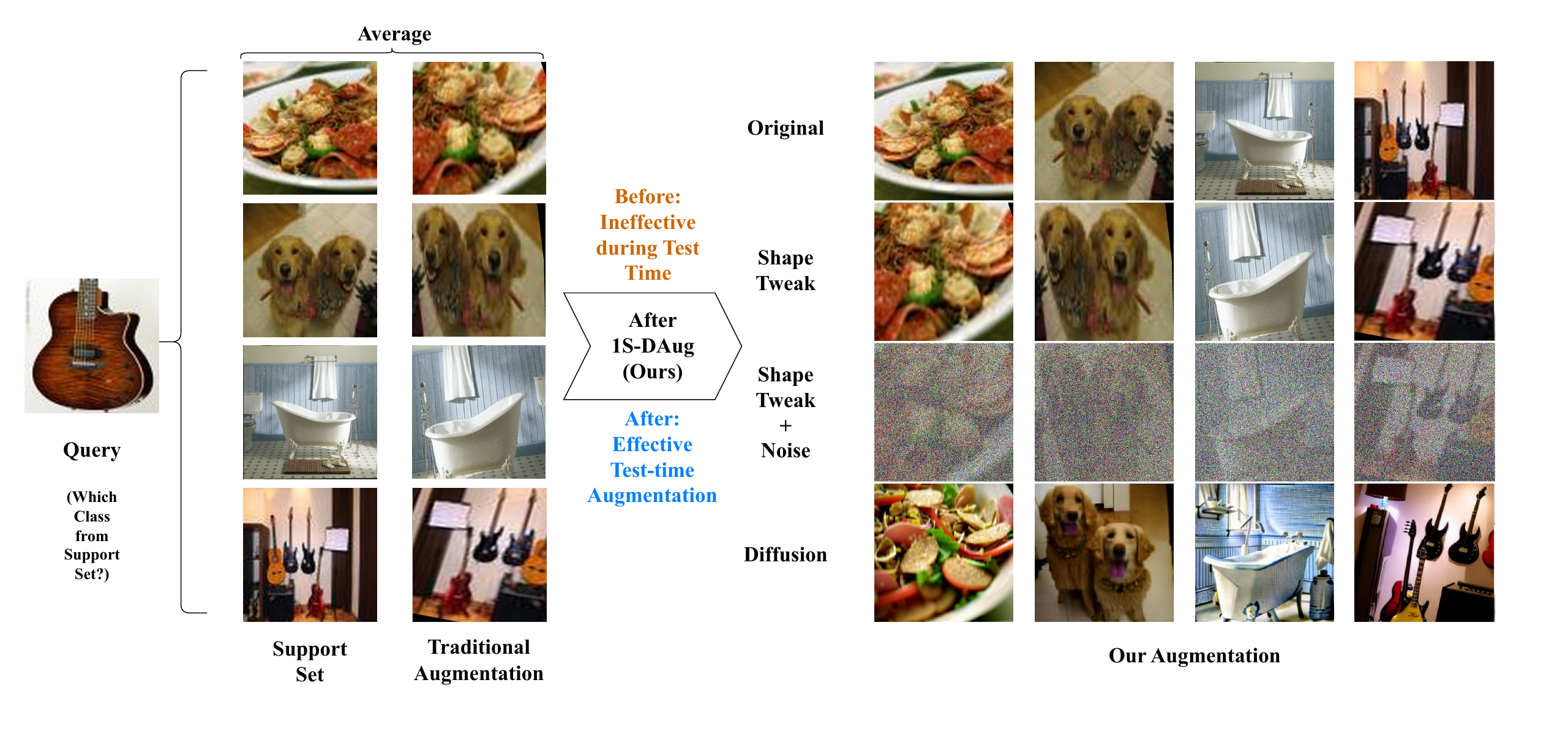}
    \caption{Pipeline of the one-shot test-time augmentation operator. 
    We apply a shape tweak and controlled noise, 
    then perform attention-conditioned diffusion to synthesize class-faithful variants. 
    Features from the original and the generated views are averaged before
    classification. Traditional test-time geometric transformations provide limited diversity
    and may decrease image quality.}
    % \vspace{-1.6em}
    \label{fig:aug_illu}
\end{figure*}

To perform one-shot test-time data augmentation from a single image,
we synthesize a set of 
diverse yet faithful 
variants based on a simple yet novel operator that 
combines three ingredients: 
(i) class-preserving shape perturbations to broaden pose and layout 
coverage; (ii) controlled noise injection to hide non-class-defining 
and distorted details; and (iii) denoising guided by attention-based 
conditioning on the original image so that content-defining cues are 
preserved while appearance and pose vary in a controlled manner. 
The augmentation pipeline does not require information beyond the input 
reference image itself, and the shape tweak and partial image-guided denoising
respectively promote diversity and accuracy, which are crucial for an effective
ensemble \citep{zhou2012ensemble}.
During image classification, the generated variants are aggregated 
with the original representation. The operator is model-agnostic and requires
neither training nor model-parameter access.

To explain the effectiveness in model generalization, from an ensemble 
perspective, test-time augmentation aggregates predictions across 
multiple views of the same input and averages out error; in an 
idealized independence thought experiment, a single-view error 
rate $\varepsilon$ would combine as $\varepsilon^{m}$ for all $m$ 
views being wrong, which is an exponential reduction. 
Complementarily, averaging 
class-faithful views can reduce the influence of view-specific 
variation and contract the effective feature radius with high probability, 
potentially improving the generalization risk bound in expectation.

In summary, our contributions include: 1) We identify the literature gap and
the importance of the one-shot test-time data augmentation; 2) We study
how to perform the one-shot data augmentation and whether it indeed helps
improve model generalization with access to only one test-time data point. We
conduct various experiments on the data operator,
obtaining positive results of up to 20\% 
relative image-classification accuracy improvement, without retraining or
model-parameter access.

%% file: sections/2_relatedworks.tex
\section{Related Work}
\textbf{Data augmentation} has been widely explored, 
while test-time data augmentation is less studied \citep{Liang_2024}. 
In \textbf{few-shot learning} (FSL), there are different setups.
We focus on the traditional image-only inductive setup, where
prior augmentation-based approaches 
expand training diversity via feature hallucination~\citep{hariharan2017low} 
or GAN-driven synthesis~\citep{wang2018low}, but they are 
largely \emph{training-time} and rely on supervision from base classes. 
In contrast, \emph{test-time} augmentation for FSL is harder; 
it needs to generate high-quality, class-faithful variants for 
unseen classes without any retraining or labels. The only explicit 
work in this direction is FSL-Rectifier~\citep{bai2025fsl}, which 
uses FUNIT~\citep{liu2019few} to combine the shape of one image 
with the class-defining style of another. This serves as a proof of 
concept, mainly targeting animal-face datasets with limited applicability to
broader settings. For \textbf{diffusion models}, attention-based
conditioning adapters now inject external signals (e.g., an input image) 
into cross-attention, enabling faithful, controllable 
generation~\citep{mou2023t2iadapter}. 
SDEdit~\citep{meng2022sdedit} adds controlled noise and then denoises
with conditioning. However, a lower noise level yields minimal changes,
whereas a higher level sacrifices faithfulness (e.g., changing the object type),
and is thus unsuitable for data augmentation that requires both diversity
and faithfulness. A few recent works use diffusion-generated synthetic images
for training augmentation outside few-shot learning,
but they rely on fine-tuning using text prompts or additional test-class
samples and are thus unsuitable for test-time augmentation
\citep{He2023IsSyntheticDataReady, Benigmim2023DATUM, Trabucco2024EffectiveDA}. 
More related work is available in Appendix Section~\ref{sec:related}.

%% file: sections/3_method.tex
\section{Test-Time Augmentation Operator}
\label{sec:method}

Diffusion preliminaries and notation used below are provided in
Appendix~\ref{app:diffusion-preliminaries}.

% \paragraph{1S-DAug.}
We produce variants of a single image by (1) applying traditional geometric
changes, (2) injecting a controlled amount of noise to determine edit
magnitude, and (3) denoising via a diffusion process conditioned on the source
image so that content-defining attributes are preserved while details and pose
can vary. We write the resulting single-image augmentation operator as
\begin{equation}
\mathcal{A}(x;\upsilon,\pi)
=
\mathrm{Den}_\varphi\!\Big(
  \mathrm{Noi}_\eta\!\big(T_{\psi}(x)\big)
  \,;\,x,\pi,\lambda_{\mathrm{img}}
\Big)\in\mathcal{X},
\end{equation}
where $\upsilon=(\psi,\eta,\lambda_{\mathrm{img}})$ collects the geometric,
noising, and conditioning hyperparameters, and $\pi$ is an optional text prompt
(omitted when empty); $T_{\psi}$ is a sampled geometric
transform; $\mathrm{Noi}_\eta$ (add noise) applies the forward diffusion
process up to a start time $t_0$ determined by the noise level $\eta$
(e.g.\ via~\eqref{eq:eta-map}); and $\mathrm{Den}_\varphi$ (denoise),
parametrized by fixed diffusion parameters $\varphi$, runs the reverse process
from $t_0$ to $0$ with image-conditioned attention. We next detail each stage
of this operator.

\paragraph{Stage 1: Shape Tweak.}
Changes in pose/layout increase coverage of plausible views without altering
class identity. Let $\psi$ be shape-tweak parameters and
\[
x_{\mathrm{geom}}=T_{\psi}(x),
\]
where $T_{\psi}$ is drawn from a family of traditional image transformations,
including rotations, anisotropic stretches, translations, perspective
jitters, and horizontal flips.

\paragraph{Stage 2: Controlled Noising.}
The noise level sets the change magnitude during the diffusion denoising
pipeline. Lower noise levels emphasize faithfulness; higher noise levels hide
geometric distortion better
and yields more diversity. We use the variance-preserving (VP) forward
kernel~\citep{sohldickstein2015deep,ho2020ddpm,nichol2021improved,meng2022sdedit}.
Let $t\in\{1,\dots,T\}$ index discrete diffusion steps, let
$\beta_t\in(0,1)$ be a variance schedule, and define
$\alpha_t \coloneqq 1-\beta_t$ and
$\bar{\alpha}_t \coloneqq \prod_{\tau=1}^{t}\alpha_\tau$.
Given a user noise level $\eta\in[0,1]$, we choose a start time $t_0=t(\eta)$
(e.g., by matching cumulative noise as in \eqref{eq:eta-map}), and sample
\begin{equation}
x_{t_0}\sim q(x_{t_0}\mid x_{\mathrm{geom}})
=\mathcal{N}\!\Big(\sqrt{\bar{\alpha}_{t_0}}\,x_{\mathrm{geom}},\ (1-\bar{\alpha}_{t_0})\,\mathbf{I}\Big),
\label{eq:vp-forward-geom}
\end{equation}
where $x_{\mathrm{geom}}$ is the geometrically perturbed input (Stage~1),
$\mathbf{I}$ is the identity covariance, and $\mathcal{N}$ denotes a Gaussian
distribution.

\paragraph{Stage 3: Image-Conditioned Diffusion Denoising.}
Let $\mathrm{Enc}$ and $\mathrm{Dec}$ denote the fixed latent encoder and
decoder. For latent diffusion, initialize $z_{t_0}:=\mathrm{Enc}(x_{t_0})$. Let
$h_t\in\mathbb{R}^{N_q\times d_h}$ denote the tokenized hidden state of a U-Net
\citep{ronneberger2015unet} block at time $t$, and let
$f_{\mathrm{img}}(x)\in\mathbb{R}^{L\times d_c}$ and
$f_{\mathrm{txt}}(\pi)\in\mathbb{R}^{M\times d_c}$ be fixed-encoder outputs for
the source image and optional text prompt. Here $N_q$ is the number of U-Net
query tokens, $L$ and $M$ are the image- and text-token counts, $d_h$ and $d_c$
are the corresponding input widths, and $d_k$ is the attention width. With
projection matrices
$W_{Q,t}\in\mathbb{R}^{d_h\times d_k}$ and
$W^{u}_{K,t},W^{u}_{V,t}\in\mathbb{R}^{d_c\times d_k}$ for
$u\in\{\mathrm{txt},\mathrm{img}\}$, define
\begin{equation}
\begin{aligned}
Q_t&=h_tW_{Q,t},\quad
K_t^{\mathrm{txt}}=f_{\mathrm{txt}}(\pi)W^{\mathrm{txt}}_{K,t},\quad
V_t^{\mathrm{txt}}=f_{\mathrm{txt}}(\pi)W^{\mathrm{txt}}_{V,t},\\
K_t^{\mathrm{img}}&=f_{\mathrm{img}}(x)W^{\mathrm{img}}_{K,t},\quad
V_t^{\mathrm{img}}=f_{\mathrm{img}}(x)W^{\mathrm{img}}_{V,t}.
\end{aligned}
\end{equation}
Thus $Q_t\in\mathbb{R}^{N_q\times d_k}$ and the concatenated keys and values
$K_t,V_t\in\mathbb{R}^{(M+L)\times d_k}$. The cross-attention
\citep{vaswani2017attention} is
\begin{equation}
A_t(Q_t,K_t,V_t)=\mathrm{softmax}\!\Big(\tfrac{Q_tK_t^\top}{\sqrt{d_k}}\Big)V_t,
\end{equation}
and we concatenate text/image tokens with a scalar weight
$\lambda_{\mathrm{img}}\ge 0$:
\begin{equation}
K_t=\begin{bmatrix}K_t^{\mathrm{txt}}\\
\lambda_{\mathrm{img}}K_t^{\mathrm{img}}\end{bmatrix},\qquad
V_t=\begin{bmatrix}V_t^{\mathrm{txt}}\\
\lambda_{\mathrm{img}}V_t^{\mathrm{img}}\end{bmatrix}.
\label{eq:attn-concat}
\end{equation}
We set the conditioning variable for the reverse update to
$c_t:=A_t(Q_t,K_t,V_t)$. The VP reverse step \eqref{eq:reverse} then reads
\begin{equation}
z_{t-1} \;=\; \mu_\varphi(z_t,t,c_t)\;+\;\sigma_t\,\epsilon,\qquad \epsilon\sim\mathcal{N}(0,\mathbf{I}),
\end{equation}
where $\mathbf I$ has the latent dimension. Rolling out $t_0\!\to\!0$ produces
$\tilde{x}=\mathrm{Dec}(z_0)$.

%% file: sections/4_theory.tex
\section{Theoretical Insights}
\label{sec:theory}

We provide two perspectives to capture the rationale
behind test-time data augmentation:
(i) a simple risk decomposition
that emphasizes accuracy and diversity and
(ii) a probabilistic analysis of feature-radius
reduction under augmentation.

\subsection{Risk Decomposition into Accuracy and Diversity}

We first illustrate the ensemble effect intuition. We denote
an image with $x$. Let $y\in\{-1,1\}$
indicate a binary label, and let
$D$ denote the resulting distribution
over $(x,y)$. For any real-valued predictor $g$, we use the scaled
squared-loss risk
\begin{equation}
\label{eq:risk-squared-short}
\mathcal{R}(g)
:=
\frac{1}{4}\,\mathbb{E}_{(x,y)\sim D}\bigl[(g(x)-y)^2\bigr],
\end{equation}
which coincides with $0$-$1$ error when $g(x)\in\{-1,1\}$. In particular, if
$g_\theta$ is a sign-valued predictor, the misclassification risk
$R_{\mathrm{cls}}(\theta):=\mathbb{P}_{(x,y)\sim D}
\bigl(y\,g_\theta(x)\le 0\bigr)$ satisfies
$R_{\mathrm{cls}}(\theta)=\mathcal{R}(g_\theta)$. Let $f(x)$ and $f_A(x)$ be
the sign predictors induced by the original and an augmented view,
respectively, and define the two-view ensemble
$\tilde f(x) := \tfrac{1}{2}\bigl(f(x)+f_A(x)\bigr)\in\{-1,0,1\}$. A direct calculation
(Appendix~\ref{app:risk-proof}) yields:

\begin{proposition}[Risk Decomposition]
\label{prop:risk-decomp}
With $\mathcal{R}(\cdot)$ as in \eqref{eq:risk-squared-short},
\begin{equation}
\begin{aligned}
\label{eq:risk-decomp-short}
&\mathcal{R}(\tilde f) - \mathcal{R}(f)\\
=
&\underbrace{\frac{1}{4}\bigl(
\mathbb{E}[f(x)y]-\mathbb{E}[f_A(x)y]
\bigr)}_{\textup{accuracy gap}}
+
\underbrace{\frac{1}{8}\bigl(
\mathbb{E}[f(x)f_A(x)]-1
\bigr)}_{\textup{diversity term}}.
\end{aligned}
\end{equation}
\end{proposition}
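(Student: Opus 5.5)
The plan is a direct algebraic expansion of both risks in \eqref{eq:risk-squared-short}. The only facts needed are $y^2=1$, $f(x)^2=1$ and $f_A(x)^2=1$ pointwise, since $f$, $f_A$ and $y$ all take values in $\{-1,1\}$, together with linearity of expectation. All expectations are taken over $(x,y)\sim D$; they are finite because every quantity involved is bounded.

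\textbf{Step 1.} Expand the baseline risk. Since $(f(x)-y)^2 = f(x)^2 - 2f(x)y + y^2 = 2 - 2f(x)y$, we get $\mathcal{R}(f) = \tfrac14\bigl(2 - 2\mathbb{E}[f(x)y]\bigr)$.

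\textbf{Step 2.} Expand the ensemble risk. Write $(\tilde f(x)-y)^2 = \tilde f(x)^2 - 2\tilde f(x)y + 1$. The squared term is
\[
\tilde f(x)^2 = \tfrac14\bigl(f(x)^2 + 2f(x)f_A(x) + f_A(x)^2\bigr) = \tfrac12\bigl(1+f(x)f_A(x)\bigr).
\]
The cross term is $-2\tilde f(x)y = -f(x)y - f_A(x)y$. Taking expectations gives
\[
\mathcal{R}(\tilde f) = \tfrac14\Bigl(\tfrac12\bigl(1+\mathbb{E}[f(x)f_A(x)]\bigr) - \mathbb{E}[f(x)y] - \mathbb{E}[f_A(x)y] + 1\Bigr).
\]

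\textbf{Step 3.} Subtract the expression from Step~1 from the one in Step~2.
\begin{itemize}
\item The constants combine as $\tfrac12 + 1 - 2 = -\tfrac12$.
\item The $\mathbb{E}[f(x)y]$ terms combine as $-\mathbb{E}[f(x)y] + 2\mathbb{E}[f(x)y] = \mathbb{E}[f(x)y]$.
\item The term $-\mathbb{E}[f_A(x)y]$ remains unchanged.
\end{itemize}
This yields
\[
\mathcal{R}(\tilde f)-\mathcal{R}(f) = \tfrac14\bigl(\mathbb{E}[f(x)y]-\mathbb{E}[f_A(x)y]\bigr) + \tfrac18\bigl(\mathbb{E}[f(x)f_A(x)]-1\bigr),
\]
which is exactly \eqref{eq:risk-decomp-short}.

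There is no substantive obstacle; the whole argument is this computation. The one point requiring care is the squared ensemble term in Step~2. The simplification $\tilde f^2 = \tfrac12(1+ff_A)$ relies on both $f$ and $f_A$ being sign-valued. It is also what produces the $\tfrac18$ coefficient on the diversity term, as opposed to the $\tfrac14$ on the accuracy gap.
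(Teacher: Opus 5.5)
Your proposal is correct and follows essentially the same route as the paper's proof. Both expand $\mathcal{R}(f)$ and $\mathcal{R}(\tilde f)$ using $f^2=f_A^2=y^2=1$ and linearity of expectation, simplify $\tilde f^2=\tfrac12(1+ff_A)$, and subtract, and your collection of the constants and the $\mathbb{E}[f(x)y]$ terms is accurate.
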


Thus improvements come from (i) maintaining or improving single-view accuracy,
and (ii) augmentation diversity.

\subsection{Generalization Bound Improvement}

Denote an image feature by $\Omega$.
Superscripts
$(0)$ and $(1)$ denote original and augmented features,
respectively.

\begin{proposition}[Radius reduction with one augmentation]
\label{prop:radius-reduction-main}
Let $m>1$ and let
$\{\Omega_i^{(0)},\Omega_i^{(1)}\}_{i=1}^{m}$ be drawn from the same continuous
distribution on $\mathbb R^d$. Define
$
\bar{\Omega}_i := \frac{1}{2}\bigl(\Omega_i^{(0)}+\Omega_i^{(1)}\bigr).
$
Let
$r_{\mathrm{orig}}^{\max}:=\max_i \|\Omega_i^{(0)}\|_2$ and
$\hat r^{\max}:=\max_i \|\bar{\Omega}_i\|_2$. Then
$
\mathbb{P}\bigl(\hat r^{\max} < r_{\mathrm{orig}}^{\max}\bigr) > 0.5.
$
\end{proposition}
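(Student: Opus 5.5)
The plan is to reduce everything to the Euclidean norms and use the exchangeability between original and augmented features. Write $a_i:=\Omega_i^{(0)}$ and $b_i:=\Omega_i^{(1)}$, and $r_0:=\max_i\|a_i\|_2=r_{\mathrm{orig}}^{\max}$ and $r_1:=\max_i\|b_i\|_2$. I read ``drawn from the same continuous distribution'' as: the $2m$ vectors are i.i.d.\ from an atomless law $P$ on $\mathbb R^d$.

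First I will show that $P(\|\Omega\|_2=r)=0$ for every $r>0$. The key observation is that each sphere $\{\|\omega\|_2=r\}$ is $P$-null. This needs a mild extra hypothesis: I will assume $P$ has a density, or at least that the law of $\|\Omega\|_2$ is atomless, and I will state this explicitly. Under it, $r_0\neq r_1$ almost surely. Since the vector $(a_1,\dots,a_m,b_1,\dots,b_m)$ has the same law after swapping the $a$-block with the $b$-block, we get $\mathbb P(r_0>r_1)=\mathbb P(r_1>r_0)=\tfrac12$.

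Second, on the event $\{r_0>r_1\}$, the triangle inequality gives for every $i$
\[
\|\bar\Omega_i\|_2\le \tfrac12\bigl(\|a_i\|_2+\|b_i\|_2\bigr)\le \tfrac12\bigl(r_0+r_1\bigr)<r_0 .
\]
Hence $\hat r^{\max}<r_0$, and so $\mathbb P(\hat r^{\max}<r_{\mathrm{orig}}^{\max})\ge\tfrac12$.

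The main obstacle is the strict inequality. For it I must exhibit an event of positive probability inside $\{r_1>r_0\}$ on which $\hat r^{\max}<r_0$ still holds; this is where $m>1$ is used. Let $\nu$ be the law of $\|\Omega\|_2$. It is atomless and therefore not a point mass, so its support contains two points $s<t$. Fix $\delta>0$ with $4\delta<t-s$ and consider the event
\[
E:=\bigl\{\|b_1\|_2\in(t,t+\delta)\bigr\}\cap\bigl\{\|a_2\|_2\in(t-\delta,t)\bigr\}\cap\bigl\{\text{all other }2m-2\text{ norms lie in }(s-\delta,s+\delta)\bigr\}.
\]
(If $s=0$, interpret the last interval as $[0,\delta)$.) Because all endpoints are interior to open neighbourhoods of support points, each factor has positive $\nu$-probability. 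By independence, $\mathbb P(E)>0$.

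On $E$ we have $r_0=\|a_2\|_2>t-\delta$ and $r_1=\|b_1\|_2>r_0$, so $E$ is disjoint from $\{r_0>r_1\}$. The averages satisfy:
\begin{itemize}
\item $\|\bar\Omega_1\|_2\le\tfrac12\bigl(s+\delta+t+\delta\bigr)<t-\delta$, using $t-s>4\delta$;
\item $\|\bar\Omega_2\|_2\le\tfrac12\bigl(t+s+\delta\bigr)<t-\delta$;
\item $\|\bar\Omega_i\|_2<s+\delta<t-\delta$ for $i\ge3$.
\end{itemize}
Thus $\hat r^{\max}<t-\delta<r_0$ on $E$. Combining with the second step,
\[
\mathbb P\bigl(\hat r^{\max}<r_{\mathrm{orig}}^{\max}\bigr)\ge\tfrac12+\mathbb P(E)>\tfrac12 .
\]

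The delicate point is the first step's hypothesis that $\nu$ has no atoms. I would make it explicit: for instance, if $P$ puts positive mass on the unit sphere, then with positive probability $r_0=r_1=1$ and the symmetry argument fails. Assuming $P$ has a density fixes this and guarantees everything above.
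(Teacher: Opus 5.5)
Your structure matches the paper's proof. Exchangeability of the two blocks gives $\mathbb P(r_0>r_1)=\mathbb P(r_1>r_0)=\tfrac12$, and the triangle inequality settles $\{r_0>r_1\}$. Strictness then comes from a positive-probability subevent of $\{r_1>r_0\}$ on which averaging still shrinks the radius. The paper only asserts that such an event exists. Your explicit $E$ is the right way to make it rigorous: the augmented maximum sits at index $1$ and the original maximum at index $2$, which is exactly where $m>1$ enters.

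The gap is your justification that every factor of $E$ has positive probability. Knowing $t\in\operatorname{supp}\nu$ only gives $\nu\bigl((t-\delta,t+\delta)\bigr)>0$, and the one-sided intervals $(t,t+\delta)$ and $(t-\delta,t)$ are not neighbourhoods of $t$. If the norms are uniform on $[0,1]$ and you take $t=1$, then $\nu\bigl((t,t+\delta)\bigr)=0$ and $\mathbb P(E)=0$. Likewise, a left endpoint of a component of the support kills $(t-\delta,t)$. Since you only chose ``two points $s<t$ in the support'', the strict inequality is not yet proved.

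The repair is short. Replace the two one-sided conditions by $\|b_1\|_2,\|a_2\|_2\in(t-\delta,t+\delta)$ together with $\|b_1\|_2>\|a_2\|_2$. By independence, exchangeability and the absence of ties, this pair of conditions has probability $\tfrac12\nu\bigl((t-\delta,t+\delta)\bigr)^2>0$, so the modified event still has positive probability. Then compare directly with $r_0=\|a_2\|_2>t-\delta$:
\begin{itemize}
\item $\|\bar\Omega_1\|_2<\tfrac12(s+t)+\delta\le t-\delta$, using $4\delta<t-s$;
\item $\|\bar\Omega_2\|_2\le\tfrac12(\|a_2\|_2+\|b_2\|_2)<\|a_2\|_2$, since $\|b_2\|_2<s+\delta<\|a_2\|_2$;
\item $\|\bar\Omega_i\|_2<s+\delta<\|a_2\|_2$ for $i\ge3$.
\end{itemize}
Alternatively, choose $t$ so that both one-sided neighbourhoods charge $\nu$; only countably many support points fail this.

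Separately, you can drop your extra hypothesis that $\nu$ is atomless, because ties work in your favour. On $\{r_0=r_1\}$, equality $\|\bar\Omega_i\|_2=r_0$ forces $\|a_i\|_2=\|b_i\|_2=r_0$ with $a_i,b_i$ positively collinear, i.e.\ $a_i=b_i$. That is a null event for atomless $P$. Hence $\mathbb P(\hat r^{\max}<r_0)\ge\tfrac12\bigl(1+\mathbb P(r_0=r_1)\bigr)$. This already exceeds $\tfrac12$ whenever $\nu$ has an atom at $c$, since then $\mathbb P(r_0=r_1)\ge\nu(\{c\})^2\nu([0,c])^{2m-2}>0$. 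Your construction covers the atomless case. For $P$ uniform on the unit sphere with $d\ge2$, the conclusion even holds almost surely.
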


Averaging more than one augmented view is expected to further stabilize the aggregated
feature radius, but we use only the one-augmentation statement above. An 
assumption-light proof is in
Appendix~\ref{app:radius-reduction}.

We next connect feature radius to a standard complexity bound. Let
$D_\Omega$ be a distribution over feature/label pairs and let
$\mathcal{T}=\{(\Omega_i,y_i)\}_{i=1}^{m}\sim D_\Omega^m$. For $B>0$, we use the
norm-constrained linear class as an analytical readout on these features,
$\mathcal{F}_B:=\{f_w(\Omega)=w^\top\Omega:\|w\|_2\le B\}$. For a loss
$\ell$ that is $L_\ell$-Lipschitz in its first argument, with $L_\ell>0$, define
\[
L_{D_\Omega}(f):=\mathbb{E}_{(\Omega,y)\sim D_\Omega}[\ell(f(\Omega),y)],
\qquad
\widehat L_{\mathcal{T}}(f):=\frac{1}{m}\sum_{i=1}^{m}
\ell(f(\Omega_i),y_i),
\]
and define the empirical Rademacher complexity and its expectation by
\[
\widehat{\mathfrak R}_{\mathcal T}(\mathcal F_B)
:=\mathbb E_\sigma\!\left[
\sup_{f\in\mathcal F_B}\frac1m\sum_{i=1}^m\sigma_i f(\Omega_i)
\right],
\qquad
\mathfrak R_m(\mathcal F_B)
:=\mathbb E_{\mathcal T}\!\left[
\widehat{\mathfrak R}_{\mathcal T}(\mathcal F_B)
\right],
\]
where $\sigma_1,\ldots,\sigma_m$ are independent Rademacher variables. Define
the maximum feature radius
\[
r_{\mathcal T}^{\max}:=\max_i\|\Omega_i\|_2.
\]

\begin{proposition}[Maximum-radius complexity and generalization bounds]
\label{prop:radius-generalization}
For every sample $\mathcal T$,
\begin{equation}
\label{eq:radius-empirical-complexity}
\widehat{\mathfrak R}_{\mathcal T}(\mathcal F_B)
\le \frac{B r_{\mathcal T}^{\max}}{\sqrt m}.
\end{equation}
Consequently,
\begin{equation}
\label{eq:radius-generalization}
\mathbb{E}_{\mathcal{T}}\!\left[
\sup_{f\in\mathcal{F}_B}
\bigl(L_{D_\Omega}(f)-\widehat L_{\mathcal{T}}(f)\bigr)
\right]
\le
2L_\ell\,\mathfrak{R}_m(\mathcal{F}_B)
\le
\frac{2L_\ell B}{\sqrt m}\,
\mathbb{E}_{\mathcal{T}}[r_{\mathcal{T}}^{\max}].
\end{equation}
\end{proposition}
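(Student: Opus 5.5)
The argument has two parts: a deterministic bound on the empirical Rademacher complexity for a fixed sample, and then the standard symmetrization bound, into which we substitute the first part after averaging over $\mathcal T$.

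\emph{Empirical complexity.} Fix $\mathcal T$ and the signs $\sigma$. Since $f_w(\Omega_i)=w^\top\Omega_i$, the supremum is linear in $w$, and Cauchy--Schwarz gives
\[
\sup_{\|w\|_2\le B}\frac1m\sum_{i=1}^m\sigma_i w^\top\Omega_i=\frac{B}{m}\Bigl\|\sum_{i=1}^m\sigma_i\Omega_i\Bigr\|_2 ,
\]
with equality attained at $w$ proportional to $\sum_i\sigma_i\Omega_i$. Taking $\mathbb E_\sigma$ and applying Jensen's inequality to the concave square root yields
\[
\mathbb E_\sigma\Bigl\|\sum_i\sigma_i\Omega_i\Bigr\|_2\le\Bigl(\mathbb E_\sigma\Bigl\|\sum_i\sigma_i\Omega_i\Bigr\|_2^2\Bigr)^{1/2}=\Bigl(\sum_i\|\Omega_i\|_2^2\Bigr)^{1/2}.
\]
The equality holds because $\mathbb E[\sigma_i\sigma_j]=\delta_{ij}$, so the cross terms vanish. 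Each $\|\Omega_i\|_2^2$ is at most $(r^{\max}_{\mathcal T})^2$, so the last expression is at most $\sqrt m\,r^{\max}_{\mathcal T}$. Dividing by $m$ gives \eqref{eq:radius-empirical-complexity}.

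\emph{Generalization bound.} For the first inequality in \eqref{eq:radius-generalization} I will use the classical symmetrization argument. Introduce a ghost sample $\mathcal T'\sim D_\Omega^m$ and write $L_{D_\Omega}(f)=\mathbb E_{\mathcal T'}\widehat L_{\mathcal T'}(f)$. Moving the supremum inside the expectation over $\mathcal T'$ bounds the left-hand side by
\[
\mathbb E_{\mathcal T,\mathcal T'}\sup_f\frac1m\sum_i\bigl(\ell(f(\Omega_i'),y_i')-\ell(f(\Omega_i),y_i)\bigr).
\]
Exchanging $(\Omega_i,y_i)$ with $(\Omega'_i,y'_i)$ leaves the distribution unchanged, so each summand can be multiplied by a Rademacher sign $\sigma_i$. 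Splitting the supremum over the two halves then gives $2\,\mathfrak R_m(\ell\circ\mathcal F_B)$, where $\ell\circ\mathcal F_B:=\{(\Omega,y)\mapsto\ell(f(\Omega),y): f\in\mathcal F_B\}$.

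Next, Talagrand's contraction lemma gives $\widehat{\mathfrak R}_{\mathcal T}(\ell\circ\mathcal F_B)\le L_\ell\,\widehat{\mathfrak R}_{\mathcal T}(\mathcal F_B)$. This applies because, for each fixed $y_i$, the map $u\mapsto\ell(u,y_i)$ is $L_\ell$-Lipschitz, and the lemma allows the Lipschitz function to differ from coordinate to coordinate. Taking expectations over $\mathcal T$ gives the middle term $2L_\ell\,\mathfrak R_m(\mathcal F_B)$. Finally, taking $\mathbb E_{\mathcal T}$ of \eqref{eq:radius-empirical-complexity} gives $\mathfrak R_m(\mathcal F_B)\le \frac{B}{\sqrt m}\mathbb E_{\mathcal T}[r^{\max}_{\mathcal T}]$, which is the last inequality.

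\emph{Main obstacle.} The only delicate point is the contraction step. The Lipschitz maps $u\mapsto\ell(u,y_i)$ depend on the label through $i$, so I will invoke the coordinate-wise version of the lemma, as in Ledoux--Talagrand or Mohri et al., Lemma 5.7. I will also note two measure-theoretic points. First, we need $\mathbb E_{\mathcal T}[r^{\max}_{\mathcal T}]<\infty$ for the bound to be meaningful; if it is infinite the statement holds trivially. Second, the supremum over $\mathcal F_B$ is measurable because $\mathcal F_B$ is parametrized continuously by the compact ball $\{\|w\|_2\le B\}$, so it equals a supremum over a countable dense subset of $w$.
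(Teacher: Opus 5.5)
Your proof is correct and follows the paper's argument. Like the paper, you obtain \eqref{eq:radius-empirical-complexity} from Cauchy--Schwarz, Jensen and $\mathbb{E}[\sigma_i\sigma_j]=\delta_{ij}$, then take $\mathbb{E}_{\mathcal T}$ to get the last inequality in \eqref{eq:radius-generalization}; the only difference is that you derive the first inequality there by symmetrization and coordinate-wise Talagrand contraction, which the paper leaves implicit as the standard Rademacher generalization bound.
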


\begin{proof}
For independent Rademacher variables $\sigma_1,\ldots,\sigma_m$,
Cauchy--Schwarz and Jensen's inequality yield
\begin{align*}
\widehat{\mathfrak{R}}_{\mathcal{T}}(\mathcal{F}_B)
&=\frac{1}{m}\mathbb{E}_{\sigma}
\sup_{\|w\|_2\le B}w^\top\sum_{i=1}^{m}\sigma_i\Omega_i \\
&\le\frac{B}{m}
\left(\mathbb{E}_{\sigma}\left\|\sum_{i=1}^{m}
\sigma_i\Omega_i\right\|_2^2\right)^{1/2}
=\frac{B}{m}\left(\sum_{i=1}^{m}\|\Omega_i\|_2^2\right)^{1/2}
\le\frac{B r_{\mathcal{T}}^{\max}}{\sqrt m},
\end{align*}
where the equality uses
$\mathbb{E}_{\sigma}[\sigma_i\sigma_j]=\mathbf{1}\{i=j\}$.
This proves~\eqref{eq:radius-empirical-complexity}. Taking expectation over
$\mathcal{T}$ gives the final inequality in~\eqref{eq:radius-generalization}.
\end{proof}

Let $\mathcal T^{(0)}$ and $\bar{\mathcal T}$ denote samples with features
$\Omega_i^{(0)}$ and $\bar\Omega_i$, respectively, and the corresponding
labels. On the radius-reduction event in Proposition~\ref{prop:radius-reduction-main},
the maximum-radius upper bound on empirical complexity satisfies
\[
\frac{B\hat r^{\max}}{\sqrt m}
<
\frac{B r_{\mathrm{orig}}^{\max}}{\sqrt m}.
\]
Hence the maximum-radius upper bound on empirical Rademacher complexity is
strictly smaller on that event.

%% file: sections/5_experiments.tex
\section{Experiments}
\label{sec:experiment}
% \subsection{Set-up and Main Results}
\paragraph{Can One-Shot Test-Time Data Augmentation Help?}
For all experiments, 
model parameters
remain fixed; we only perform averaging across the original and generated image embeddings. 
We set the aggregation weight of the original image to 0.5 and that of the
additional images collectively to 0.5, thereby
emphasizing the original images. For diffusion noise addition, we
set the noise level to 0.7. Details are available in 
Appendix~\ref{sec:details}.

We choose the classic 5-way-1/5-shot episodic evaluation on 
miniImagenet and tieredImagenet \citep{ILSVRC15} as our main 
evaluation protocols; 
additional experiments are conducted on CUB (fine-grained birds) 
\citep{WahCUB_200_2011} and an animal-face dataset used previously 
for test-time augmentation studies (Animals) \citep{liu2019few, bai2025fsl}. 
% When source benchmarks provide around $84\times84$ inputs (e.g., miniImagenet), we first upsample to the diffusion resolution using a super-resolution/upscaling operator (i.e., Real-ESRGAN) \citep{wang2021realesrgan} to the $512\times512$ scale and then apply our augmentation pipeline. For the main experiments, we scale these images back to the $84\times84$ size to ensure alignment with baseline set-ups. We also compare these results to those without the scale-up procedure, ensuring no discrepancy. 
For the main experiments, 
we retain this established protocol rather than a CLIP-based few-shot protocol
\citep{radford2021learning,kravets2025rethinking}, as its language-derived class
representations introduce a confound.

Formally, for few-shot learning (FSL), the base classes $\mathcal{C}_{\mathrm{base}}$ 
used for training are disjoint from the novel evaluation classes $\mathcal{C}_{\mathrm{novel}}$. An
$N$-way $K$-shot episode samples $N$ novel classes and provides a support set
$\mathcal{S}=\{(x_{c,j},c):c\in\mathcal{C},\,j=1,\ldots,K\}$, followed by
queries whose labels belong to the same episode-specific class set
$\mathcal{C}\subset\mathcal{C}_{\mathrm{novel}}$. The task is to classify each
query among these $N$ classes from the $K$ labeled examples per class
\citep{vinyals2016matching,snell2017prototypical,wang2020generalizing}. The 
evaluation is inductive: each query is predicted using only the support set and
that query, without jointly exploiting other unlabeled queries. 

For FSL, we evaluate four backbones: a shallow 4-layer convolutional 
neural network (ConvNet), a 12-layer residual network (Res12) 
\citep{he2016deep}, a small vision transformer backbone (ViTSmall) 
\citep{dosovitskiy2021an}, and a tiny swin transformer backbone (SwinTiny) 
\citep{liu2021swin}. Note that the encoders ViTSmall and SwinTiny are
pretrained on the ImageNet-1k \citep{ILSVRC15} dataset, while the Res12 
and ConvNet encoders are trained on the training split of each FSL dataset.
Few-shot classification is performed with a non-parametric 
Euclidean-distance/cosine-similarity prototype classifier in the 
episodic setting. These choices match common FSL practice, including 
ProtoNet and FEAT-style set-to-set variants that operate on support/query 
embeddings, ensuring comparability with prior work and our reproduced 
baselines \citep{snell2017prototypical, ye2020fewshot}. We evaluate 
pretrained ProtoNet and FEAT models. We adopt
a Euclidean-distance-based classifier for miniImagenet and tieredImagenet,
and a cosine-similarity-based classifier for Animals and CUB. Our reproduced
baselines follow the official repositories and closely match previously
reported evaluation setups, including DeepEMD \citep{zhang2020deepemd},
Meta-Baseline \citep{chen2021metabaseline}, and MetaOptNet
\citep{lee2019metaoptnet} on 5-way-1/5-shot inductive benchmarks with the Res12
backbone.
Among the baselines, SLA-AG \citep{lee2020slaag} involves self-supervised 
label augmentation, and Meta-MaxUp \citep{ni2020data} involves 
training data augmentation; both are ensemble-based methods.
We sample 15,000 5-way-1/5-shot queries and report mean accuracy 
with 95\% confidence intervals across episodes. This follows the 
standard protocol used in related FSL work 
\citep{ye2020fewshot, snell2017prototypical}. 

\input{tables/mini_tiered}
\input{tables/cub_animals}

\input{tables/vlm_adversarial}

Tables~\ref{tab:main}, \ref{tab:cub}, and~\ref{tab:main2} summarize
5-way-1-shot and 5-way-5-shot results with Res12, ConvNet, ViTSmall, and
SwinTiny backbones. Our variants with 1/2/3 additional augmentations are denoted
as 1S-DAug-1/2/3, respectively. Note that we adopt 5-way-1-shot FSL models
for the 5-way-5-shot evaluation, and Table \ref{tab:cub} contains 
5-way-1-shot results on CUB and Animals. As reported, test-time 1S-DAug 
consistently improves over the corresponding non-augmented baselines and 
over prior strong Res12 methods reported under the same backbone 
(e.g., on miniImagenet, FEAT improves from $63.31\%$ to $69.25\%$, a
$+9.38\%$ relative gain). Note that 
1S-DAug performs significantly better on the Animals dataset compared 
to the prior work that uses GAN reconstruction, which only achieves 
57.90\% on the 5-way-1-shot setup with one augmentation \citep{bai2025fsl}. 
Limitations in the quality of GAN-generated images are discussed in
\citep{bai2025fsl, liu2019few}, and the related failure case visualization 
is available in Appendix Figure \ref{fig:failure}. All gains persist
across the datasets. On the four 5-way-1-shot classification benchmarks,
relative improvements are at least $+4.82\%$ and at most $+17.23\%$ on
miniImagenet, at least $+5.23\%$ and at most $+7.72\%$ on tieredImagenet,
at least $+2.84\%$ and at most $+20.45\%$ on CUB, and at least $+1.63\%$
and at most $+2.73\%$ on Animals. These results imply that
(i) the image-conditioned
diffusion step preserves class-defining content, and (ii) the 
shape tweak with noising creates diversity without compromising faithfulness. 
Additionally, ViTSmall/SwinTiny backbones are pretrained on ImageNet-1k 
\citep{ILSVRC15}, with consistent non-trivial improvement 
over these more knowledgeable backbones, suggesting that the operator is 
applicable to different model architectures, scales, and datasets.

To broaden the evaluation, we also cover
larger vision-language models (VLMs). Large language models have been
explicitly studied for adaptation 
performed at inference time from limited contextual examples 
\citep{brown2020language, min2022rethinking}. Modern VLMs such as LLaVA 
extend this paradigm to image-grounded reasoning \citep{liu2023visual}. 
Our VLM is LLaVA-1.5-7B \citep{liu2024llava15}.
On the adversarial split of POPE 
\citep{li2023pope}, we probe robustness to object 
hallucination under challenging confounders. We aggregate yes/no 
decisions across the original image and multiple generated views. 
As shown in Table~\ref{tab:vlm_adv}, 1S-DAug improves the vanilla baseline
from $78.83\%$/$77.41\%$ to $79.57\%$/$77.49\%$ with two augmented views
($+0.94\%$/$+0.10\%$ relative accuracy/F1 improvement), and to
$81.13\%$/$79.08\%$ with four augmented views ($+2.92\%$/$+2.16\%$
relative accuracy/F1 improvement).
Our implementation, including the prompt and decoding, follows that of Visual
Contrastive Decoding (VCD) exactly \citep{leng2023vcd}. Overall, these 
results affirm the usefulness of test-time augmentation beyond the 
traditional image-classification setting with smaller backbones.
Further VLM implementation details are provided in Appendix~\ref{sec:vlm-details}.

\paragraph{How Much and Where to Augment?}
\label{sec:hp}
In FSL, we study support-only, query-only, and joint support+query augmentation
under ProtoNet (Res12) on miniImagenet. See our ablation table 
(Table~\ref{tab:ablation}) for the full grid. Accuracy improves monotonically 
as we add augmented copies to \emph{both} support and queries
(e.g., from $60.01\%$ with no augmentation to $64.94\%$, 
yielding a $+8.22\%$ relative gain). Adding only support copies while
leaving queries unaugmented can underperform due to distribution mismatch
between prototype construction and query embeddings (e.g., the accuracy with
$+3/0$ is $61.82\%$, well below the $64.55\%$ achieved when queries
are matched with $+3/1$). This suggests that matched augmentation on both 
sides yields the largest benefit. Additionally, we observe decreasing marginal returns as
we add more augmented copies, which is expected. 
\input{tables/vary_shots}

\paragraph{Qualitative Evaluations.}
% We conduct ablation studies for our method, and more analyses, including hyperparameter tuning (Section \ref{sec:hp}), efficiency studies are available in Appendix~\ref{sec:efficiency}.
\begin{figure*}[ht]
    \centering
    \includegraphics[width=\linewidth]{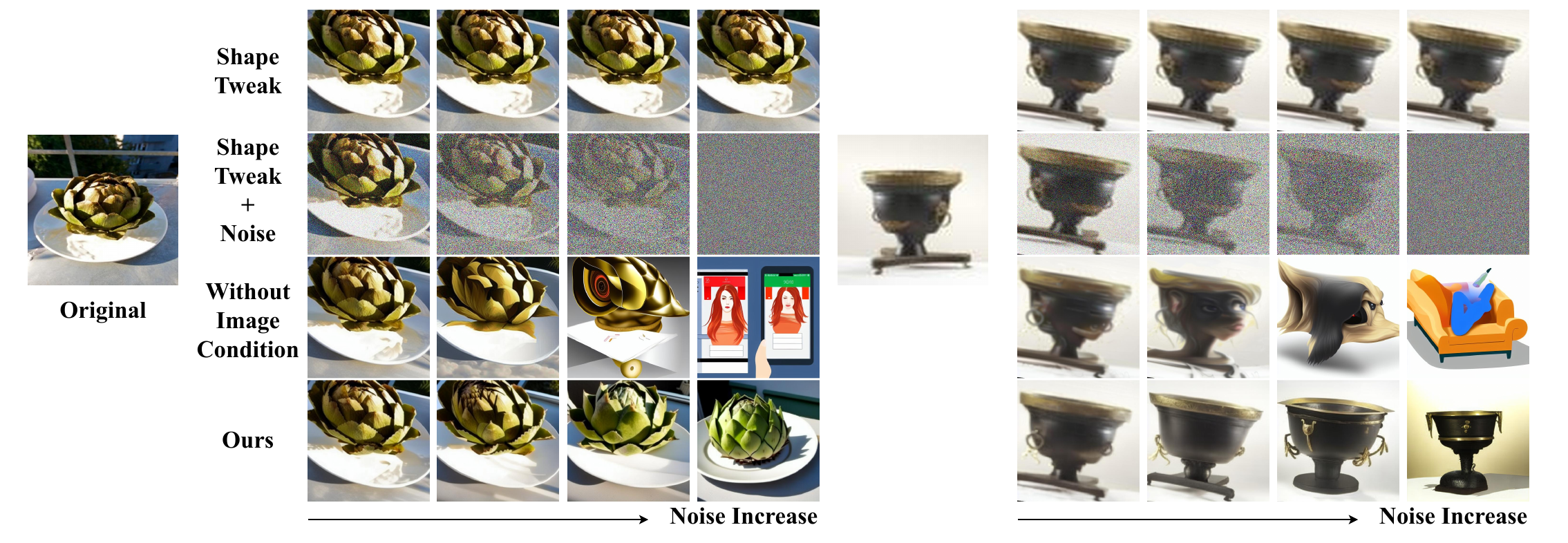}
    \caption{Effect of noise and conditioning. Qualitative ablation on a single input across increasing noise levels. Shape-only edits yield limited diversity; adding noise increases diversity but may reduce fidelity without conditioning. Attention-conditioned diffusion preserves class-defining content while enabling controlled pose/appearance changes; excessive noise without the image condition degrades faithfulness.}
    \label{fig:rm_guide}
    \vspace{-1.0em}
\end{figure*}

\begin{figure*}[ht]
    \centering
    \includegraphics[width=\linewidth]{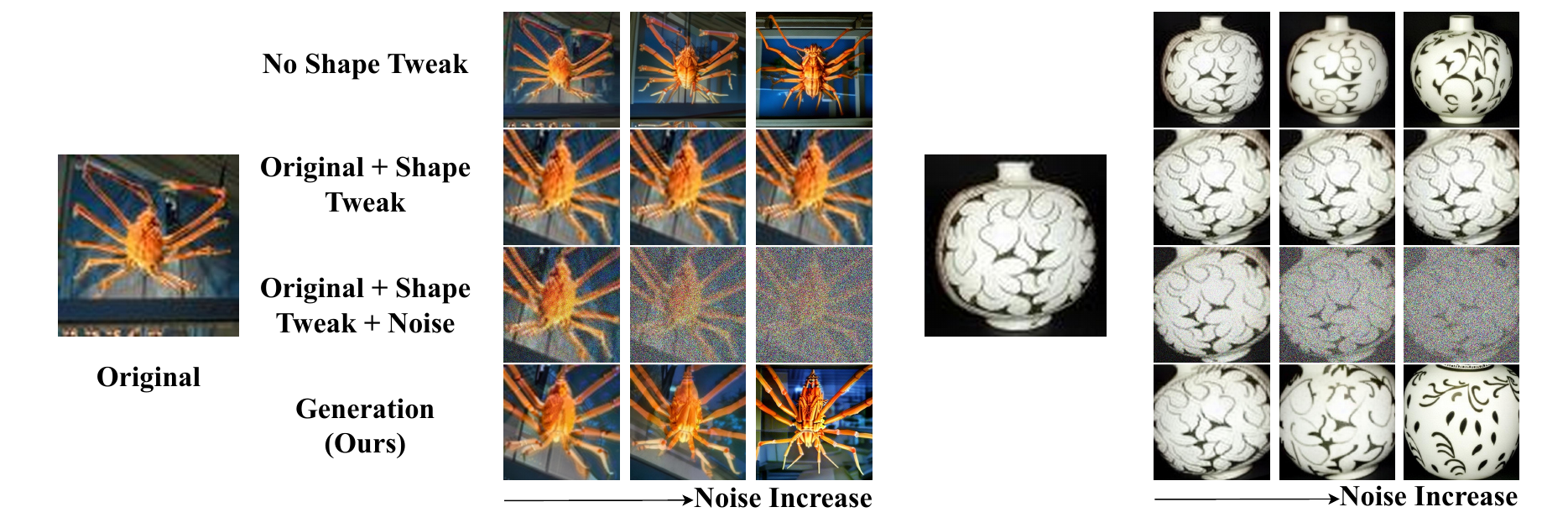}
    \caption{Effect of noise and shape tweak. Comparison across three settings: no shape tweak, shape tweak only, and shape + noise + attention-conditioned diffusion (ours). Increasing noise and including shape tweak expand diversity, and our full setting provides the best balance for both diversity and faithfulness.}
    \label{fig:rm_shape}
    \vspace{-0.4em}
\end{figure*}

Figure~\ref{fig:rm_guide} illustrates the effect of removing image
conditioning or shape tweaks and changing the noise level. With small
noise, changes are minimal; with large noise and no image conditioning, 
generations may drift toward off-class content; removing shape tweaks 
reduces diversity and visible pose/layout variations. More visualizations 
are provided in Appendix~\ref{sec:viz}. 

\paragraph{Additional Ablation Results.}
\input{tables/ablation}
We further dissect the contribution of each component, including 
aggregation weight adjustment, image conditioning, noise magnitude,
shape tweaking, and diffusion generation using FEAT with a Res12 backbone
on miniImagenet, in a controlled 5-way-1-shot setting with one augmented 
query and one augmented support per episode (Table~\ref{tab:shots}). 
We first observe that assigning less aggregation weight to the original samples
slightly reduces model accuracy. This is expected,
since the original samples are authentic images of the highest quality.
Besides, removing the image conditioning substantially degrades performance,
which mirrors our qualitative studies in Figure \ref{fig:rm_guide}. When a 
small noise level ($\eta{=}0.20$) is applied with shape tweaking and 
conditioning, the diversity gain is limited, and distortion caused by shape
tweaking may also remain and backfire, yielding $63\%$ accuracy. Increasing
the noise strength to a moderate level ($\eta{=}0.70$) improves coverage 
while preserving class faithfulness, pushing performance to $67.1\%$, the 
best among diffusion-based rows. Pushing noise to the extreme ($\eta{=}1.0$) 
still delivers reasonable performance ($67.0\%$) when conditioning is enabled. 
This performance is enabled by the greater diversity between the generated
output and the original input, but fidelity to the original is
jeopardized, especially when confronted with rare object types. 
Therefore, generation from maximum noise should be discouraged. Besides,
ablating shape tweaks reduces accuracy to $66.1\%$, confirming that 
geometric variation helps cover different poses and layouts. Substituting
true extra images of the same class (“Real/Oracle”) provides an upper 
bound of $78.0\%$, showing the headroom available with more independent 
samples. Meanwhile, traditional geometric test-time edits (rotations, 
affine warps, color jitter) only reach $57.89\%$, supporting the 
observation that such transforms add little new information and may
distort original images, for example by decreasing image resolution.

% Overall, the ablation highlights two findings: (i) moderate controlled noise balances fidelity and diversity best; and (ii) shape tweaks contribute complementary geometric diversity beyond the stochastic perturbations. Our full configuration (shape tweak + controlled noise + attention-conditioned diffusion) consistently outperforms classical test-time augmentation and approaches the oracle bound.

\section{Limitations and Future Work}
\label{sec:limitation}
The augmentation introduces inference overhead
(see Appendix~\ref{sec:efficiency}). Reducing computational cost
(e.g., via lighter generative backbones or faster denoising schedules) 
is useful, and the overhead may diminish as computational infrastructure
advances. Overall, our main contribution is not the
specific implementation, but the algorithm design that benefits from
computational advances. Additionally, evaluation in this work is limited to image classification and
a training-free protocol. 
Extending the data augmentation operator to other tasks (e.g., training data augmentation and image generation) 
remains a meaningful direction for future work. 

\section{Conclusions}
To conclude, we identify the scarcity of test-time data augmentation in the literature,
and study the usefulness of one-shot test-time 
generative augmentation for generalization improvement. 
Our findings are positive: 1S-DAug, as a simple plug-in
for frozen FSL models, achieves consistent non-trivial
accuracy gains across different datasets without model-parameter access. This
model-agnostic, data-side design makes the
approach practically meaningful for deployments with large, fixed 
or restricted models. Overall, we believe this direction
offers a high-potential building block for inference robustness in
real-world scenarios.

% \section{Ethics Statement}
% Assuming high quality of the pretrained image generator, the data augmentation can improve accuracy and robustness in
% data-scarce settings and under distribution shift, which is particularly relevant in
% high-stakes perception pipelines (e.g., autonomous driving and medical imaging) where
% generalization failures can have serious consequences. By generating task-specific
% variants directly from a readily available pretrained model, the approach reduces both prediction error and the computational and data-collection costs typically associated with retraining,
% fine-tuning, and restricted model parameter access (e.g.,
% API-only or proprietary deployments). 

% In safety-critical deployments, any gains from augmentation should be verified with rigorous testing and human oversight.
\newpage
\section{Use of AI Tools}
Large language models (LLMs) were only used to aid and polish writing (e.g., improving clarity, 
tightening tone, harmonizing mathematical presentations, 
converting prose to \LaTeX, and recording implementation details). The technical content,
mathematical formulations, and experimental design were authored by the authors.

LLMs were \textbf{not} used to generate code, select experimental data; to tune hyperparameters automatically; 
or to produce figures or tables beyond cosmetic wording. 
All code was implemented and executed by human, and all analyses were conducted by the authors; all numbers reported in
the paper come from our runs. 
Where LLM-assisted text appears (e.g., phrasing of method and related-work passages), 
it was reviewed for factual faithfulness and edited for technical precision.

% \paragraph{Accountability.}
% The authors take full responsibility for the content of the paper, including any errors. This disclosure is provided in the interest of transparency and in accordance with community best practices.

\section{Reproducibility Statement}
We take reproducibility seriously. 
The paper specifies 
the experimental setup (encoders, classifier, datasets, and episodic protocol),
and all evaluation details (Section~\ref{sec:experiment}); ablations and qualitative
analyses are provided to validate design choices. Appendix~\ref{sec:details} provides details on
data preprocessing (including upscaling and shape-tweak parameters), hyperparameters 
(noise level, conditioning strength, denoising steps), and the exact episodic sampling 
procedure (5-way-1-shot, 15{,}000 queries, 95\% confidence intervals). Upon acceptance,
we will release code with deterministic seeds to reproduce the results exactly.

%% file: tables/mini_tiered.tex
\begin{table*}[ht]
\centering
\caption{Inductive 5-way-1-shot and 5-way-5-shot accuracy (\%) on miniImagenet and tieredImagenet with Res12 backbones. Parenthetical gains are relative to the corresponding re-implemented baseline. Dashes denote unavailable or unreported results. The best results among our variants and among other FSL methods are highlighted in bold. We do not chase after the state-of-the-art, 
and can likely achieve even better performance using stronger base models.}
\resizebox{\textwidth}{!}{%
\begin{tabular}{lcccc}
\toprule
& \multicolumn{2}{c}{\textbf{5-Way-1-Shot (\%)}} & \multicolumn{2}{c}{\textbf{5-Way-5-Shot (\%)}} \\
\cmidrule(lr){2-3}\cmidrule(lr){4-5}
\textbf{Method (Res12)} & \textbf{miniImagenet} & \textbf{tieredImagenet} & \textbf{miniImagenet} & \textbf{tieredImagenet} \\
\midrule
DeepEMD \citep{zhang2020deepemd}                & $65.91 \pm 0.82$ & $71.16 \pm 0.87$ & $82.41 \pm 0.56$ & \textbf{86.03} $\pm 0.58$ \\
Meta-MaxUp \citep{ni2020data}                   & $62.81 \pm 0.34$ &- & $79.38 \pm 0.24$ &- \\
Meta-Baseline \citep{chen2021metabaseline}      & $63.17 \pm 0.23$ & $68.62 \pm 0.27$ & $79.26 \pm 0.17$ & $83.74 \pm 0.18$ \\
MetaOptNet \citep{lee2019metaoptnet}            & $62.64 \pm 0.61$ & $65.99 \pm 0.72$ & $78.63 \pm 0.46$ & $81.56 \pm 0.53$ \\
% TADAM \citep{oreshkin2018tadam}                 & $58.50 \pm 0.30$ & $-$              & $76.70 \pm 0.30$ & $-$ \\
% MTL \citep{sun2019mtl}                          & $61.20 \pm 1.80$ & $-$              & $75.50 \pm 0.80$ & $-$ \\
SLA-AG \citep{lee2020slaag}                     & $62.93 \pm 0.63$ & $-$              & $79.63 \pm 0.47$ & $-$ \\
ProtoNet + TRAML \citep{li2020traml}            & $60.31 \pm 0.48$ & $-$              & $77.94 \pm 0.57$ & $-$ \\
ConstellationNet \citep{xu2021constellationnet} & $64.89 \pm 0.23$ & $-$              & $79.95 \pm 0.17$ & $-$ \\
Classifier-Baseline \citep{chen2021metabaseline} & \textit{$58.91 \pm 0.23$} & \textbf{$68.07 \pm 0.26$} & \textbf{$77.76 \pm 0.17$} & \textbf{$83.74 \pm 0.18$} \\
DFR \citep{cheng2023disentangled}         & \textbf{67.74} $\pm 0.86$ & $\textbf{71.31} \pm 0.93$ & \textbf{82.49} $\pm 0.57$ & $85.12 \pm 0.64$ \\
\midrule
% ProtoNet-Res12 \citep{snell2017prototypical}          & $62.39 \pm 0.21$          & $68.23 \pm 0.23$ & $80.53 \pm 0.14$          & $84.03 \pm 0.16$ \\
ProtoNet-Res12 (re-impl.) \citep{snell2017prototypical}                           & $60.01 \pm 0.65$ & $65.28 \pm 0.32$ & $75.34 \pm 0.49$ & $81.13 \pm 0.29$ \\
ProtoNet-Res12+1S-DAug-1 (Ours)                      & $62.90 \pm 0.66$ \textit{(+4.82\%\,$\uparrow$)} & $69.06 \pm 0.32$ \textit{(+5.79\%\,$\uparrow$)} & $78.89 \pm 0.48$ \textit{(+4.71\%\,$\uparrow$)} & $83.86 \pm 0.27$ \textit{(+3.36\%\,$\uparrow$)} \\
ProtoNet-Res12+1S-DAug-2 (Ours)                      & $64.61 \pm 0.66$ \textit{(+7.67\%\,$\uparrow$)} & $70.32 \pm 0.32$ \textit{(+7.72\%\,$\uparrow$)} & - & - \\
ProtoNet-Res12+1S-DAug-3 (Ours)                      & $64.94 \pm 0.66$ \textit{(+8.22\%\,$\uparrow$)} & - & - & - \\
% FEAT-Res12 \citep{ye2020fewshot}                      & $66.78 \pm 0.20$          & $70.80 \pm 0.23$ & $82.05 \pm 0.14$ & $84.79 \pm 0.16$ \\
FEAT-Res12 (re-impl.) \citep{ye2020fewshot}                               & $63.31 \pm 0.65$ & $68.28 \pm 0.28$ & $77.90 \pm 0.48$ & $82.21 \pm 0.28$\\
FEAT-Res12+1S-DAug-1 (Ours)                          & $67.08 \pm 0.65$ \textit{(+5.95\%\,$\uparrow$)} & $71.85 \pm 0.28$ \textit{(+5.23\%\,$\uparrow$)} & 81.96 $\pm 0.44$ \textit{(+5.21\%\,$\uparrow$)} & 84.82 $\pm 0.26$ \textit{(+3.17\%\,$\uparrow$)} \\
FEAT-Res12+1S-DAug-2 (Ours)                          & $69.04 \pm 0.65$ \textit{(+9.05\%\,$\uparrow$)} & \textbf{73.18} $\pm 0.27$ \textit{(+7.18\%\,$\uparrow$)} &82.62 $\pm 0.45$ \textit{(+6.06\%\,$\uparrow$)} & \textbf{85.55} $\pm 0.25$ \textit{(+4.06\%\,$\uparrow$)} \\
FEAT-Res12+1S-DAug-3 (Ours)                          & \textbf{69.25} $\pm 0.65$ \textit{(+9.38\%\,$\uparrow$)} & - & \textbf{83.38} $\pm 0.41$ \textit{(+7.03\%\,$\uparrow$)} & - \\
\bottomrule
\end{tabular}%
}
\label{tab:main}
\end{table*}

%% file: tables/cub_animals.tex
\begin{table}[h]
% \vspace{-1.3em}
\caption{Inductive 5-way-1-shot accuracy (\%) on Animals with Res12 backbones and CUB with ConvNet backbones. Parenthetical gains are relative to the corresponding non-augmented baseline.}
\centering
\begin{tabular}{lcc}
\toprule
% & \multicolumn{2}{c}{\textbf{5-Way-1-Shot \%}} \\
% \cmidrule(lr){2-3}
% \midrule
\textbf{Method (Res12/ConvNet)} & \textbf{Animals} & \textbf{CUB}    \\
\midrule
ProtoNet & $73.20 \pm 0.63$ &$46.38 \pm 0.22$             \\
% ProtoNet-Res12 + FUNIT (FSL-Rectifier) \citep{liu}    & $74.08 \pm 0.60$   & -   \\
ProtoNet+1S-DAug-2 (Ours)   & $75.20 \pm 0.65$ \textit{(+2.73\%\,$\uparrow$)}  &$55.50 \pm 0.24$ \textit{(+19.66\%\,$\uparrow$)}      \\
% FEAT-Res12 + FUNIT (FSL-Rectifier)          &- & $80.54 \pm 0.56$   \\
FEAT       &79.37 $\pm 0.59$          &51.10 $\pm 0.24$        \\  
FEAT+1S-DAug-2 (Ours)       &\textbf{80.66} $\pm 0.62 \textit{ (+1.63\%\,$\uparrow$)}$          &\textbf{61.55} $\pm 0.25$ \textit{(+20.45\%\,$\uparrow$)} \\
\bottomrule
\end{tabular}%
% \vspace{-0.4em}
\label{tab:cub}
\end{table}

\input{tables/swin_vit}

%% file: tables/swin_vit.tex
\begin{table}[h]
\centering
\caption{5-way-1-shot accuracy (\%) on miniImagenet/CUB with ViTSmall/SwinTiny backbones. Parenthetical gains are relative to the corresponding non-augmented baseline.}
\begin{tabular}{lccc}
\toprule
\textbf{Dataset} & \textbf{Method} & \textbf{ViTSmall} & \textbf{SwinTiny} \\
\midrule
\multirow{4}{*}{MiniImagenet}
& ProtoNet \citep{snell2017prototypical}        & $71.86$  & $67.32$ \\
& ProtoNet+1S-DAug-1 (Ours)                    & $80.42$ \textit{(+11.91\%\,$\uparrow$)}  & $75.12$ \textit{(+11.59\%\,$\uparrow$)} \\
& ProtoNet+1S-DAug-2 (Ours)                    & $82.76$ \textit{(+15.17\%\,$\uparrow$)} & $77.82$ \textit{(+15.60\%\,$\uparrow$)} \\
& ProtoNet+1S-DAug-3 (Ours)                    & $83.66$ \textit{(+16.42\%\,$\uparrow$)} & 78.92 \textit{(+17.23\%\,$\uparrow$)} \\
\midrule
\multirow{2}{*}{CUB}
& ProtoNet \citep{snell2017prototypical}        & $71.90$  & $69.78$ \\
& ProtoNet+1S-DAug-1 (Ours)                    & $75.72$ \textit{(+5.31\%\,$\uparrow$)}  & $71.76$ \textit{(+2.84\%\,$\uparrow$)} \\
\bottomrule
\end{tabular}%
\label{tab:main2}
\end{table}

%% file: tables/vlm_adversarial.tex
\begin{table}[h]
\centering
\caption{POPE adversarial-split results (\%) with LLaVA-1.5-7B. Parenthetical gains are relative to our evaluated LLaVA-1.5-7B baseline.}
% \resizebox{\columnwidth}{!}{%
\begin{tabular}{lcc}
\toprule
\textbf{Method} & \textbf{Accuracy} & \textbf{F1} \\
\midrule
LLaVA-1.5-7B (ours eval.) & $78.83$ & $77.41$ \\
LLaVA-1.5-7B + VCD \citep{leng2023vcd} & $80.88$ \textit{(+2.60\%\,$\uparrow$)} & \textbf{81.33} \textit{(+5.06\%\,$\uparrow$)} \\
LLaVA-1.5-7B + 1S-DAug-2 (Ours) & $79.57$ \textit{(+0.94\%\,$\uparrow$)} & $77.49$ \textit{(+0.10\%\,$\uparrow$)} \\
LLaVA-1.5-7B + 1S-DAug-4 (Ours) & \textbf{81.13} \textit{(+2.92\%\,$\uparrow$)} & $79.08$ \textit{(+2.16\%\,$\uparrow$)} \\
\bottomrule
\end{tabular}%
% }
\label{tab:vlm_adv}
\end{table}

%% file: tables/vary_shots.tex
\begin{table}[h]
% \vspace{-1.3em}
\caption{Inductive 5-way-1-shot accuracy (mean ± 95\% CI) as a function of the number of augmented copies for supports (rows) and queries (columns).}
\centering
\small
\begin{tabular}{lccccc}
\toprule
& & \multicolumn{4}{c}{\textbf{Query}} \\
\cmidrule(lr){3-6}
\multirow{5}{*}{\textbf{Support}} & {} & \textbf{+0} & \textbf{+1} & \textbf{+2} & \textbf{+3} \\
\cmidrule(lr){2-6}
& +0 & $60.01 \pm 0.65$ & $60.24 \pm 0.69$ & $60.09 \pm 0.70$ & $60.31 \pm 0.70$ \\
& +1 & $60.20 \pm 0.66$ & $62.90 \pm 0.66$ & $63.05 \pm 0.67$ & $63.16 \pm 0.67$ \\
& +2 & $61.80 \pm 0.64$ & $64.53 \pm 0.66$ & $64.61 \pm 0.66$ & $64.87 \pm 0.66$ \\
& +3 & $61.82 \pm 0.64$ & $64.55 \pm 0.65$ & $64.72 \pm 0.66$ & $64.94 \pm 0.66$ \\
\bottomrule
\end{tabular}%
\label{tab:shots}
\end{table}

%% file: tables/ablation.tex
\begin{table}[h]
% \vspace{-1.3em}
\caption{Ablation of aggregation weight adjustment, shape tweak, noise level, and diffusion conditioning. ‘Traditional’ uses standard geometric edits; ‘Real’ substitutes actual additional images of the same object type. Our full setting (shape + controlled noise + attention-conditioned diffusion) outperforms classical test-time augmentation and approaches the oracle (true image) upper bound.}
    \centering
    \resizebox{\columnwidth}{!}{
\begin{tabular}{l l l l | c}
\toprule
\begin{tabular}[c]{@{}l@{}}Same\\Noise Level\end{tabular} &
\begin{tabular}[c]{@{}l@{}}Shape\\Tweak\end{tabular} &
\begin{tabular}[c]{@{}l@{}}Generation\\Techniques\end{tabular} &
\begin{tabular}[c]{@{}l@{}}Additional\\Adjustment\end{tabular} &
\begin{tabular}[c]{@{}c@{}}5-Way-1-Shot\\Accuracy\end{tabular}
\\
\midrule
\tick (0.70)  & \tick & \tick  &\tick (0.3 original image weight) & 66.79 $\pm$ 0.63\\
\tick (0.70) &\tick & \tick  &\tick (remove image conditioning) & 53.94 $\pm$ 0.62\\
 \cross (0.20) & \tick &\tick &\cross &63.45 $\pm$ 0.64\\
\tick (0.70) & \tick & \tick &\cross & 67.08 $\pm$ 0.62 \\
\cross (1.00) & - & \tick &\cross & 67.01 $\pm$ 0.68 \\
\tick (0.70)       & \cross & \tick &\cross & 66.12 $\pm$ 0.66 \\
-       & \tick (Real/Oracle) & \cross &\cross &77.99 $\pm$ 0.62 \\
-       & \tick (Traditional)   & \cross &\cross & 57.89 $\pm$ 0.67 \\
\bottomrule
\end{tabular}
}
\label{tab:ablation}
% \vspace{-1.4em}
\end{table}

%% file: sections/6_appendix.tex
\input{sections/8_diffusion_preliminaries}
\input{sections/7_app_theory}

\section{Efficiency}
\label{sec:efficiency}
\input{tables/efficiency}
We record the wall-clock running time for our inference script, and the results are reported in Table \ref{tab:efficiency}. All experiments are run on a single NVIDIA RTX~A5000 GPU. As expected, runtime scales with the noising level. Higher noising produces larger edits and requires longer inference, whereas lower noising is faster. As we start from the noisy image halfway, the inference cost is generally lower than that of standard diffusion process starting from pure noise. Additionally, we use the vanilla DDPM solver for the main experiments. We experiment with a fast solver, the DPM-Solver++ \citep{lu2025dpm}, which can halve the inference steps without sacrificing accuracy. The 10-step inference time for DPM-Solver++, starting from 0.75 noise level, is 0.23 seconds.

\section{GAN Failure Case Illustration}
Figure \ref{fig:failure} illustrates the failure cases of GAN-based image-to-image translation models \citep{liu2019few}.
\begin{figure}
    \centering
    \includegraphics[width=\linewidth, trim={0, 5cm, 0, 4cm}, clip]{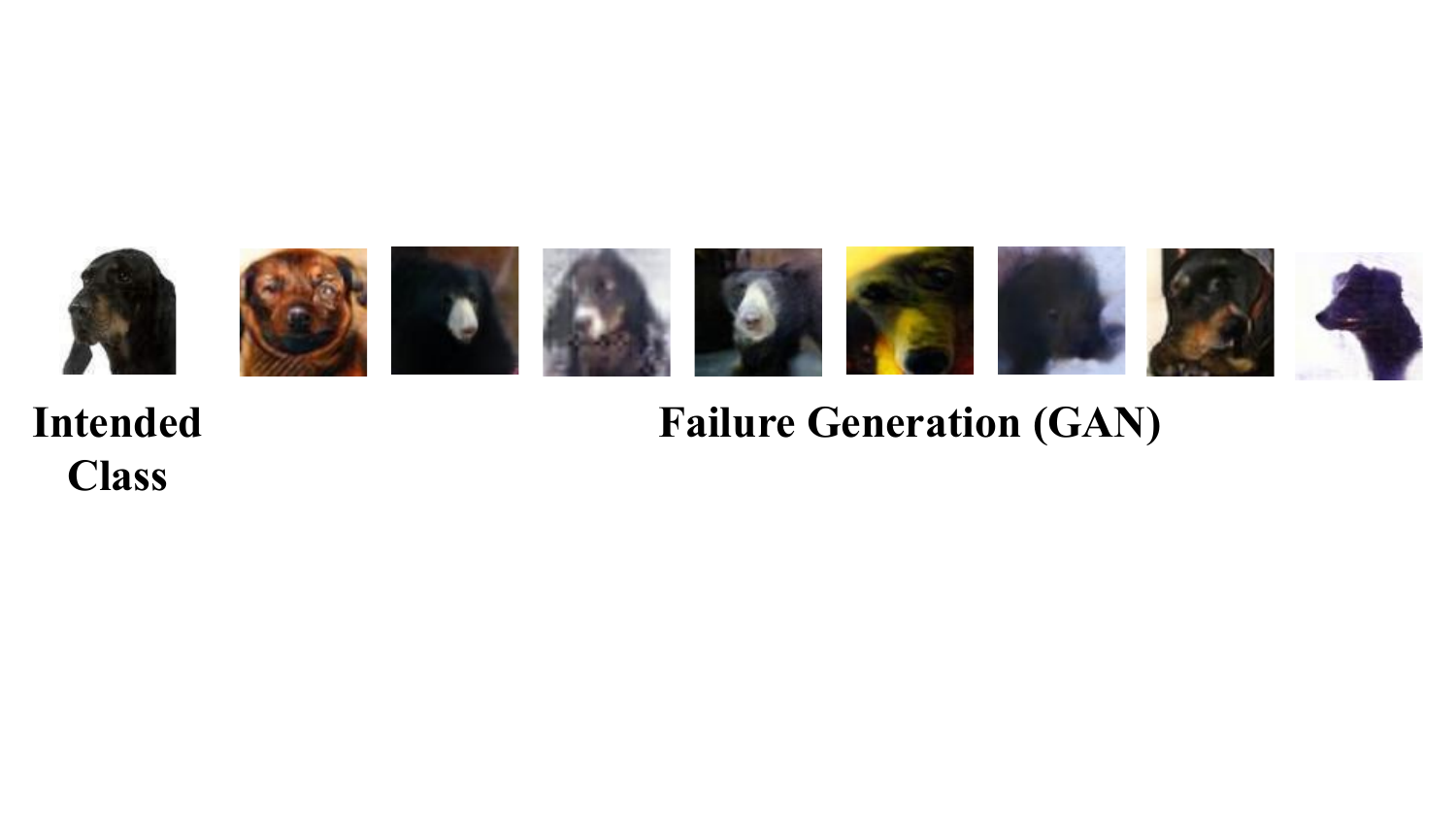}
    \caption{Failure modes of GAN-based image-to-image translation. Examples where image-to-image GAN translation fails to preserve the intended class. Rows contain target class and failed GAN outputs with typical artifacts.}
    \label{fig:failure}
\end{figure}
\section{More Implementation Details}
\label{sec:details}
\subsection{FSL model training (ProtoNet and FEAT)}
We follow the public FEAT repository for episodic training and evaluation, including 5-way, 1-shot meta-training; 15 queries/class for both train and evaluation; Euclidean distance for classification; Res12 as the default backbone. Key arguments (defaults shown where applicable) are exposed by \text{train\_fsl.py}: 
\emph{task setup} \(\{\)dataset=\{\text{miniImagenet}, \text{tieredImagenet}, \text{CUB}, \text{Animals}\}, way=5, shot=1, query=15; eval\_way=5, eval\_shot=1, eval\_query=15\(\}\); 
\emph{optimization} \(\{\)max\_epoch=400, episodes\_per\_epoch=100, num\_eval\_episodes=200, lr \(=10^{-4}\) (with pre-trained weights), lr\_scheduler=\text{step}, step\_size=20, gamma=0.2, momentum=0.9, weight\_decay \(=5\cdot 10^{-4}\)\(\}\); 
\emph{model} \(\{\)model\_class \(\in\) \{\text{ProtoNet}, \text{FEAT}\}, backbone\_class \(\in\) \{\text{ConvNet}, \text{Res12}\}, use\_euclidean (Euclidean distances), temperature=1 (ProtoNet)/64 (FEAT), lr\_mul=10 for the set-to-set head\(\}\). Example FEAT commands for Res12 on \text{tieredImagenet} use \(\text{lr}=2\cdot10^{-4}\), \(\text{step\_size}\in\{20,40\}\), \(\gamma=0.5\), and temperatures \(\text{temperature}=64\), \(\text{temperature2}\in\{64,32\}\); we mirror this recipe for our FEAT runs and use the same episodic protocol for ProtoNet. 

Concretely, in our re-trains we use:
\begin{itemize}
\item \textbf{Backbones:} Res12 (all models except CUB), ConvNet (CUB).
\item \textbf{ProtoNet:} \(\text{model\_class}=\) \text{ProtoNet}, Euclidean distances, max\_epoch \(=400\), episodes\_per\_epoch \(=100\), lr \(=1\text{e-}4\) (pretrained), step scheduler with step\_size \(=20\), \(\gamma=0.2\); temperature \(=1\) unless otherwise tuned on validation; momentum \(=0.9\), weight\_decay \(=5\cdot10^{-4}\). (All other task/eval counts as above.) 
\item \textbf{FEAT:} \(\text{model\_class}=\) \text{FEAT}, lr \(=2\text{e-}4\), lr\_mul \(=10\) for the Transformer head, step scheduler with step\_size in \(\{20,40\}\) and \(\gamma=0.5\); temperature \(=64\), temperature2 \(\in\{64,32\}\); Euclidean distances enabled; same episodic counts as ProtoNet.
\end{itemize}

At evaluation, we sample 15{,}000 queries and report mean accuracy with 95\% confidence intervals, matching the repository’s evaluation practice.

\subsection{1S-DAug (one-shot test-time augmentation) configuration}
We implement 1S-DAug as an image-conditioned denoising pipeline and a light, class-preserving geometric pre‐edit (“shape tweak”). The script exposes the following arguments (defaults in \(\cdot\)), which we fix across all main tables unless the ablation states otherwise:

\paragraph{Core diffusion/editing.}
We use stable-diffusion-v1.5 from the diffusers library as the base image generator.
\(\text{--noise-level}\in[0,1]\) \(\cdot\) \(0.7\): entry point on the diffusion trajectory (larger = more rewrite, smaller = higher faithfulness);
\(\text{--steps}\) \(\cdot\) \(20\): denoising steps;
\(\text{--cfg}\) \(\cdot\) \(9.0\): guidance scale;
\(\text{--seed}\) for deterministic replication; attention slicing is enabled; VAE tiling can be toggled for large images.

\paragraph{Backbone generator.}
\(\text{--model}\) \(\cdot\) \text{runwayml/stable-diffusion-v1-5}; images are fed at \(512\times512\) resolution. 

\paragraph{Image-prompt adapter.}
\(\text{--ip\_adapter}\) (on/off), \(\text{--ip-repo}\) \(\cdot\) \text{h94/IP-Adapter}, \(\text{--ip\_scale}\) \(\cdot\) \(0.8\) controls conditioning strength.

\paragraph{Shape tweaking (geometric seed).}
Enabled via \(\text{--shape\_aug}\); parameters: 
\(\text{--shape-aug-rotate}\) \(\cdot\) \(20^\circ\) (uniform in \([-\!R,+\!R]\)); 
\(\text{--shape-aug-stretch}\) \(\cdot\) \(0.20\) (anisotropic scales \(s_x,s_y\in[1-S,1+S]\)); 
\(\text{--shape-aug-translate}\) \(\cdot\) \(0.025\) (fraction of width/height); 
\(\text{--shape-aug-persp}\) \(\cdot\) \(0.12\) (corner jitter fraction for a single‐view projective warp). 
Intermediate augmented images can be saved for inspection with \(\text{--save\_aug}\).

\paragraph{I/O and batching.}
Single-image mode or directory batch mode; recursive directory traversal and extension override are supported; per-image runtime and peak memory are logged (used in our efficiency table).

\paragraph{Recommended ranges (used in ablations).}
Noise levels \(\in\{0.25,0.5,0.75,1.0\}\);
shape tweaks at the defaults above or slightly weaker for fine-grained datasets. When noise is very small, diversity is limited; when very large, fidelity drops unless image conditioning is active (consistent with our qualitative/quantitative ablations).

\subsection{VLM experiment implementation details}
\label{sec:vlm-details}
We implement the Vision Language Model (VLM) extension with the same 1S-DAug operator but replace the few-shot prototype head by an answer-space aggregation over a frozen LLaVA-1.5-7B model. Our evaluation script is based on \text{ens2.py}. Concretely, we load the local HuggingFace-format \text{llava-1.5-7b-hf} checkpoint with \text{AutoProcessor} and \text{LlavaForConditionalGeneration}, use \text{bfloat16} weights with automatic device placement, and evaluate on the adversarial split of POPE using COCO \text{val2014} images. Unless otherwise stated, we evaluate 3{,}000 examples with seed \(1234\).

\paragraph{Decoding and answer format.}
For each POPE sample, we append the instruction ``Please answer this question with one word.'' and generate a one-token response (\text{paper-match-max-new-tokens}=1). The decoded continuation is parsed into a binary \text{yes}/\text{no} prediction. This keeps the protocol close to POPE and avoids introducing a task-specific classifier head.

\paragraph{View generation and aggregation.}
For an input image, we first obtain the original LLaVA answer, then optionally synthesize \(K\in\{2,4\}\) augmented views using the same 1S-DAug img2img pipeline. Each generated view is queried with the same question and one-word decoding rule. The final decision is produced by weighted voting over the original and augmented answers. In the current implementation, \text{orig-logit-weight}=1.0 and \text{aug-logit-weight}=1.0, and each augmented view receives unit vote weight in the final tally, so the aggregate influence of augmentations increases with \(K\). We report both \(K=2\) and \(K=4\) in the main paper.

\paragraph{Generator settings.}
The VLM extension uses \text{runwayml/stable-diffusion-v1-5} with DPM-Solver++ enabled, \text{steps}=10, \text{cfg}=9.0, \text{noise-level}=0.1, IP-Adapter enabled with \text{h94/IP-Adapter} and \text{ip-scale}=0.8, and shape tweaking turned on for the augmented runs. The shape-tweak parameters are \text{rotate}=10 degrees, \text{stretch}=0.40, \text{translate}=0.01, and \text{persp}=0.10. For the baseline row without augmentation, we set \(K=0\) and disable shape tweaking. The script also records per-image generation time and writes summary metrics in the form \text{aug\_views\_\{K\}\_split\_\{split\}\_shapetweak\_\{flag\}\_num\{N\}.txt}.

\section{More Related Work}
\label{sec:related}
\paragraph{Few-shot Learning.}
FSL methods commonly fall into metric-, model-, and augmentation-based families. 
\emph{Metric-based} methods learn an embedding where queries are classified by proximity to supports or class prototypes, including Matching Networks~\citep{vinyals2016matching}, Prototypical Networks~\citep{snell2017prototypical}, Relation Networks~\citep{sung2018relation}, and episodic feature adaptation such as FEAT~\citep{ye2020fewshot}. Strong baselines refine this recipe with improved training protocols and heads, e.g., Baseline++~\citep{chen2019closer}, Meta-Baseline~\citep{chen2021metabaseline}, MetaOptNet~\citep{lee2019metaoptnet}, and transductive inference methods such as TPN~\citep{liu2019tpn}, LaplacianShot~\citep{ziko2020laplacianshot}, and TIM~\citep{iscen2020tim}. 
\emph{Model-based} approaches emphasize rapid parameter adaptation from few examples, e.g., gradient-based meta-learning with MAML~\citep{finn2017maml}, Meta-SGD~\citep{li2017metasgd}, Reptile~\citep{nichol2018reptile}, and ANIL~\citep{raghu2020anil}. 
\emph{Augmentation-based} approaches increase training diversity via feature or image synthesis—e.g., feature hallucination~\citep{hariharan2017lowshot} and delta-based example synthesizers~\citep{schwartz2018deltaencoder}. These are primarily \emph{training-time} techniques that rely on base-class supervision; by contrast, few-shot \emph{test-time} augmentation must produce high-quality, class-faithful variants for unseen classes without retraining or labels.

Test-time generative augmentation for FSL remains limited. \citet{bai2025fsl} uses an adversarial image-to-image translator to combine the geometric “shape’’ of one image with the class-defining “style’’ of another (i.e., FUNIT \citep{liu2019few}) for inference-time augmentation. While a useful proof of concept, the dataset scope is narrow and failure arises on more complex, diverse categories, reflecting the difficulty of preserving content under large structural gaps.

\paragraph{Diffusion Models.}

Early diffusion models established iterative denoising as a competitive generative paradigm~\citep{sohldickstein2015deep,ho2020ddpm}, with subsequent improvements to training and sampling~\citep{nichol2021improved}. Latent-space diffusion amortizes computation via a learned autoencoder, enabling high-resolution synthesis~\citep{rombach2022highresolution}. Attention-based conditioning adapters inject external signals into cross-attention without retraining the denoiser, supporting controllable editing and image-conditioned generation, including image-prompt adapters~\citep{ye2023ipadapter}, general adapters~\citep{mou2023t2iadapter}, and control modules such as ControlNet~\citep{zhang2023controlnet}. These advances in stability, controllability, and fidelity make diffusion well-suited for few-shot test-time augmentation. Editing-by-denoising constructs variants by adding controlled noise to a source image and running the reverse process with conditioning~\citep{meng2022sdedit}. However, in this setup, too little noise yields small changes, and too much sacrifices faithfulness (e.g., changes object type), which is not suitable for data augmentation.

\paragraph{GAN/Diffusion-based Data Augmentation.}
Adversarial generators have long been used for data expansion and translation. Few-shot translation frameworks (e.g., FUNIT~\citep{liu2019few} and derivatives), unpaired mappers (CycleGAN~\citep{zhu2017unpaired}), and class-conditional generators (StyleGAN families~\citep{karras2019style}) can expand training sets but face limitations for FSL test-time augmentation, including training stability, mode coverage, and faithfulness for \emph{unseen} classes without supervision. For diffusion-based generators, there have been a few recent works focusing on using the diffusion-based synthetic images for downstream tasks other than few-shot learning. However, these works rely on fine-tuning with text prompts or a handful of extra target-class samples, not suitable for test-time augmentation \citep{He2023IsSyntheticDataReady, Benigmim2023DATUM}. In contrast, we do not rely on any label or additional target-class samples, and the strict set-up fulfills the requirement for challenging downstream tasks like FSL test-time augmentation.

\paragraph{Vision-Language Models and Hallucination Mitigation.}
Recent vision-language models have become a widely used paradigm for multimodal reasoning, from modular frozen-backbone systems such as BLIP-2~\citep{li2023blip2} and InstructBLIP~\citep{dai2023instructblip} to instruction-tuned assistants such as LLaVA~\citep{liu2023visual}. Their growing practical relevance motivates training-free test-time methods that remain applicable even when model parameters are fixed or inaccessible. A central reliability challenge is object hallucination, namely the generation of image-inconsistent content. \citet{li2023pope} systematize this issue through POPE, and recent surveys summarize its causes and mitigation strategies~\citep{liu2024surveyhallucination}. Existing reduction methods include decoding-time approaches such as Visual Contrastive Decoding (VCD)~\citep{leng2023vcd} and OPERA~\citep{huang2024opera}, which modify inference to better ground outputs in the visual signal. Our VLM extension is complementary to this line of work: rather than changing the token-level decoding rule alone, we perturb and re-query the visual evidence itself via 1S-DAug, then aggregate the resulting answers across views.

\section{Algorithm Summary}
Algorithmic summary of 1SDAug is available as Algorithm \ref{alg:1s-daug}.

\label{sec:summary}
\begin{algorithm}[t]
\caption{1S-DAug (Single image $x$)}
\label{alg:1s-daug}
\begin{algorithmic}[1]
\REQUIRE image $x$; steps $T$; schedule $(\beta_t)$; user noise $\eta\in[0,1]$; conditioning weight $\lambda_{\mathrm{img}}$; number of variants $K$; 
\STATE \textbf{Geometric seed:} sample a shape tweak $T_{\psi}$ and set $x_{\mathrm{geom}} \leftarrow T_{\psi}(x)$
\STATE \textbf{Noising entry:} compute $t_0$ from $\eta$ via equation~\eqref{eq:eta-map}; draw $x_{t_0} \sim q(\cdot \mid x_{\mathrm{geom}})$ using equation~\eqref{eq:vp-forward}
\STATE \textbf{Working state:} set $z_{t_0} \leftarrow x_{t_0}$ (pixel space) or $z_{t_0} \leftarrow \mathrm{Enc}(x_{t_0})$ (latent variant)
\FOR{$k = 1$ \textbf{to} $K$}
  \FOR{$t = t_0, t_0-1, \dots, 1$}
    \STATE Form $K_t, V_t$ by equation~\eqref{eq:attn-concat} using $x$ (and $p$ if used); set $Q_t = W_Q z_t$ and compute $c_t \leftarrow A_t(Q_t, K_t, V_t)$
    \STATE \textbf{Reverse step:} update $z_{t-1} \leftarrow \mu_{\varphi}(z_t, t, c_t) + \sigma_t \epsilon$ via equation~\eqref{eq:reverse}, with $\epsilon \sim \mathcal{N}(0, I)$
  \ENDFOR
  \STATE \textbf{Decode:} $\tilde{x}^{(k)} \leftarrow \mathrm{Dec}(z_0)$ \COMMENT{identity if denoising in pixel space}
\ENDFOR 
\textbf{Return} $\{\tilde{x}^{(k)}\}_{k=1}^K$
\end{algorithmic}
\end{algorithm}

\section{1S-DAug Visualization}
\label{sec:viz}
Figure \ref{fig:viz1} and Figure \ref{fig:viz2} illustrate more image generation results of our proposed method.
\begin{figure}[ht]
    \centering
    \includegraphics[width=\linewidth]{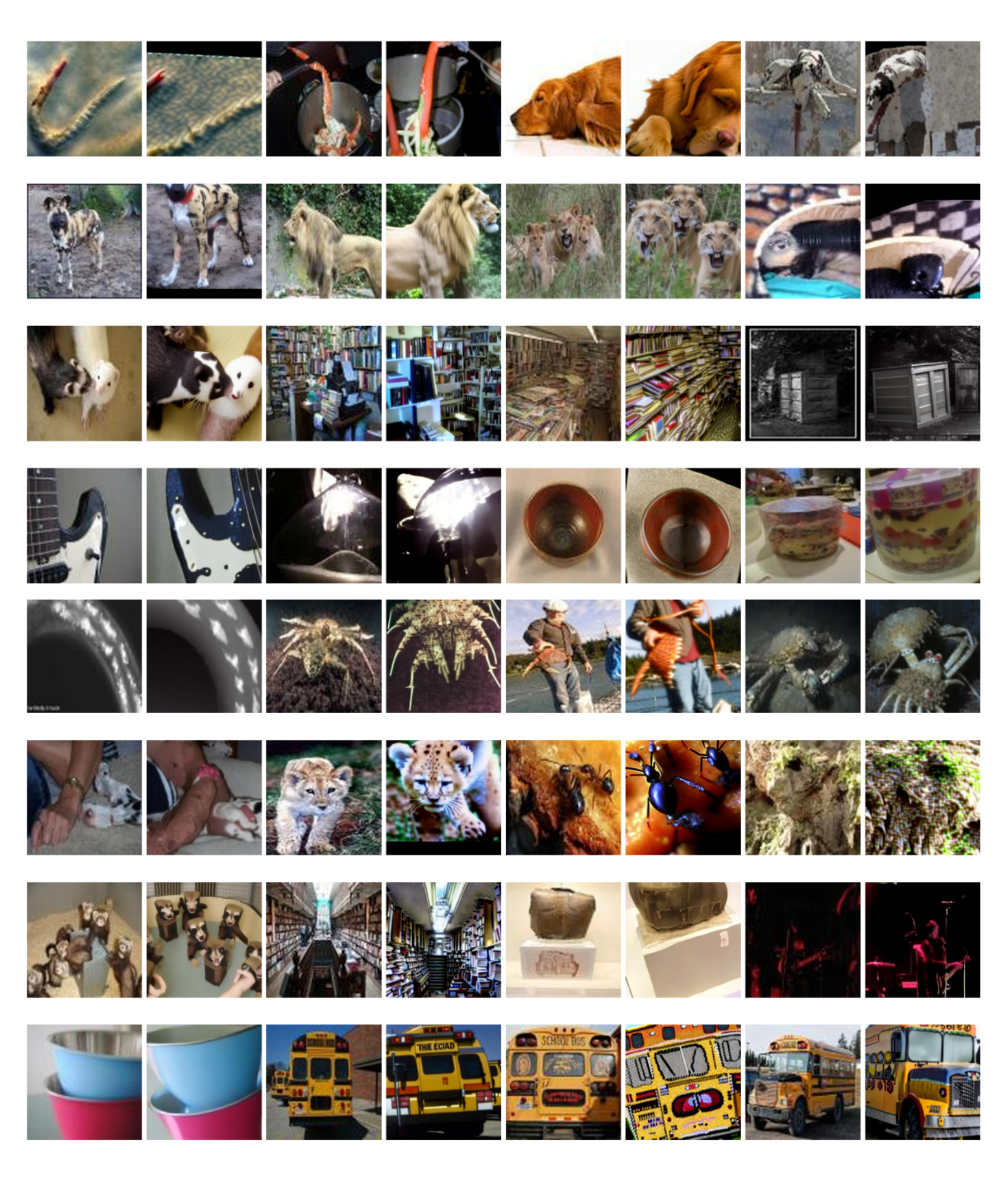}
    \caption{More qualitative results from 1S-DAug. Each pair contains the original image followed by our synthesis. All visualization pairs are random without cherry-picking.}
    \label{fig:viz1}
\end{figure}
\begin{figure}
    \centering
    \includegraphics[width=\linewidth]{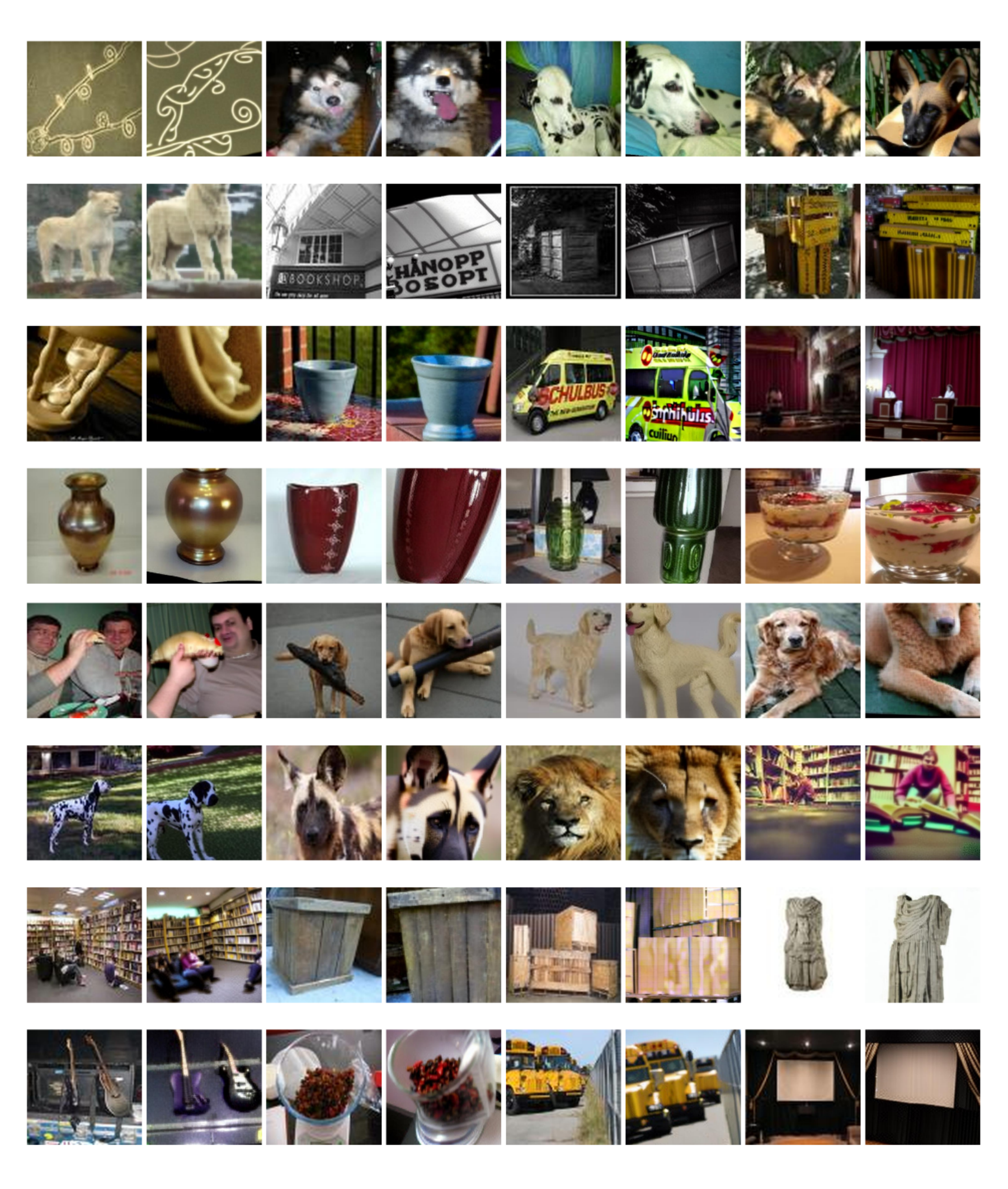}
    \caption{More qualitative results from 1S-DAug. Each pair contains the original image followed by our synthesis. All visualization pairs are random without cherry-picking.}
    \label{fig:viz2}
\end{figure}

%% file: sections/8_diffusion_preliminaries.tex
\section{Diffusion Preliminaries}
\label{app:diffusion-preliminaries}

Let $x_0\in\mathcal{X}$ be an image and $x_t$ its noised version at step
$t\in\{1,\dots,T\}$. A variance schedule $(\beta_t)_{t=1}^T$ defines
$\alpha_t = 1-\beta_t$ and
$\bar{\alpha}_t = \prod_{\tau=1}^t \alpha_\tau$. We use the
variance-preserving (VP) forward process
\citep{sohldickstein2015deep,ho2020ddpm,nichol2021improved,meng2022sdedit}
\begin{equation}
q(x_t \mid x_0)
=
\mathcal{N}\!\big(\sqrt{\bar{\alpha}_t}\,x_0,\ (1-\bar{\alpha}_t)\mathbf{I}\big),
\label{eq:vp-forward}
\end{equation}
so that sampling $x_t$ can be written as
$x_t=\sqrt{\bar{\alpha}_t}\,x_0+\sqrt{1-\bar{\alpha}_t}\,\epsilon$ with
$\epsilon\sim\mathcal{N}(0,\mathbf{I})$.

A user noise level $\eta\in[0,1]$ is mapped to a discrete start time
$t_0\in\{1,\dots,T\}$ by matching cumulative noise:
\begin{equation}
t_0=\arg\min_{t\in\{1,\dots,T\}}\big|(1-\bar{\alpha}_t)-\eta^2\big|.
\label{eq:eta-map}
\end{equation}

Let $\hat\epsilon_\varphi$ be a learned noise predictor with parameters
$\varphi$, and let $c_t$ be the conditioning signal at time $t$. The VP
reverse update is
\begin{equation}
x_{t-1} = \mu_\varphi(x_t,t,c_t) + \sigma_t\,\epsilon,
\label{eq:reverse}
\end{equation}
with $\epsilon\sim\mathcal{N}(0,\mathbf{I})$ and sampler noise level
$\sigma_t$ (we use $\sigma_t{=}0$ in our experiments), where
$\mu_\varphi(x_t,t,c_t)
= \alpha_t^{-1/2}\!\big(x_t - \tfrac{1-\alpha_t}{\sqrt{1-\bar{\alpha}_t}}\,
\hat\epsilon_\varphi(x_t,t,c_t)\big)$.
The same formulas apply in a latent space $z_t$ via an encoder--decoder
$(\mathrm{Enc},\mathrm{Dec})$, with denoising output
$\tilde{x}=\mathrm{Dec}(z_0)$~\citep{rombach2022highresolution}.

%% file: sections/7_app_theory.tex
\section{Proof of the Risk-Decomposition Proposition}
\label{app:risk-proof}

\begin{proof}
Recall that $\mathcal R(g) = \tfrac{1}{4}\,\mathbb{E}\bigl[(g(x)-y)^2\bigr]$ and
$f,f_A:\mathcal{X}\to\{-1,1\}$, $\tilde f = \tfrac12(f+f_A)$.
Since $f^2(x)=f_A^2(x)=y^2=1$,
\begin{equation}
\mathcal R(f)
= \frac{1}{4}\,\mathbb{E}\bigl[(f-y)^2\bigr]
= \frac{1}{4}\,\mathbb{E}[f^2 - 2fy + y^2]
= \frac{1}{2} - \frac{1}{2}\,\mathbb{E}[f(x)y].
\end{equation}
For $\tilde f$,
\begin{equation}
\mathcal R(\tilde f)
= \frac{1}{4}\,\mathbb{E}\Bigl[\Bigl(\frac{f+f_A}{2} - y\Bigr)^2\Bigr]
= \frac{1}{16}\,\mathbb{E}\bigl[(f+f_A)^2\bigr]
  - \frac{1}{4}\,\mathbb{E}\bigl[(f+f_A)y\bigr]
  + \frac{1}{4}\,\mathbb{E}[y^2].
\end{equation}
Expanding $(f+f_A)^2 = f^2 + 2ff_A + f_A^2$ and using $f^2=f_A^2=y^2=1$,
\begin{equation}
\begin{aligned}
\mathcal R(\tilde f)
&= \frac{1}{16}\,\mathbb{E}[2 + 2ff_A]
  - \frac{1}{4}\,\mathbb{E}[fy + f_Ay]
  + \frac{1}{4} \\
&= \frac{1}{8}\bigl(1 + \mathbb{E}[f(x)f_A(x)]\bigr) \\
&\quad
  - \frac{1}{4}\,\mathbb{E}[f(x)y]
  - \frac{1}{4}\,\mathbb{E}[f_A(x)y]
  + \frac{1}{4}.
\end{aligned}
\end{equation}
Subtracting $\mathcal R(f)$ gives
\begin{equation}
\mathcal R(\tilde f) - \mathcal R(f)
= \frac{1}{4}\bigl(\mathbb{E}[f(x)y] - \mathbb{E}[f_A(x)y]\bigr)
 + \frac{1}{8}\bigl(\mathbb{E}[f(x)f_A(x)] - 1\bigr).
\end{equation}
\end{proof}

% Throughout this appendix we use the pairwise reduction.
% Each input is a query-prototype pair $x = (q,p)$ with label $y \in \{-1,1\}$.
% For a fixed encoder $\Phi_\theta$ and prototype $p$ we define the difference
% feature $\Omega_\theta(x) := \Phi_\theta(q)-p$.
%  and the shifted Euclidean score
% $g_\theta(x) := \beta-\|\Omega_\theta(x)\|_2^2$ for a fixed decision threshold $\beta\in\mathbb{R}$.
% The $0$-$1$ pairwise risk of $\theta$ is
% \[
% R_{\mathrm{cls}}(\theta) := \mathbb{P}\bigl(y\,g_\theta(x)\le 0\bigr),
% \]
% which serves as a proxy for the actual rank-based episodic classification.

% Where the arguments apply to generic real-valued predictors, we write $g$
% for a function in a class $\mathcal{G}$ and specialize to the encoder
% score class $\mathcal{G} = \{x\mapsto g_\theta(x):\theta\in\Theta\}$ at the end.

\section{Proof of the Radius-Reduction Proposition}
\label{app:radius-reduction}
\begin{proposition}[Radius reduction with one augmentation]
Assume $\{\Omega_i^{(0)},\Omega_i^{(1)}\}_{i=1}^{m}$ are from the same continuous
distribution on $\mathbb{R}^d$. Let
\begin{equation}
\bar{\Omega}_i := \frac{1}{2}\bigl(\Omega_i^{(0)}+\Omega_i^{(1)}\bigr),\;
\hat r^{\max} := \max_i \|\bar{\Omega}_i\|_2,\;
r^{\max}_{\mathrm{aug}} := \max_i \|{\Omega}_i^{(1)}\|_2,\;
r_{\mathrm{orig}}^{\max} := \max_i \|\Omega_i^{(0)}\|_2.
\end{equation}
Then
\begin{equation}
\mathbb{P}\bigl(\hat r^{\max} < r_{\mathrm{orig}}^{\max}\bigr) > 0.5.
\end{equation}
\end{proposition}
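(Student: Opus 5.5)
The plan is to reduce the event $\{\hat r^{\max} < r^{\max}_{\mathrm{orig}}\}$ to a comparison between $r^{\max}_{\mathrm{aug}}$ and $r^{\max}_{\mathrm{orig}}$ via the triangle inequality, and then use symmetry. The statement as written only says the features share a continuous distribution. I will therefore make explicit the assumption I believe is intended: the $2m$ vectors $\Omega_i^{(0)},\Omega_i^{(1)}$ are i.i.d.\ with a density on $\mathbb{R}^d$. The strict-inequality part will also need $d\ge 2$, or at least a non-degenerate density.

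\textbf{Step 1: the triangle-inequality bound.} For every $i$,
\[
\|\bar\Omega_i\|_2 \le \tfrac12\bigl(\|\Omega_i^{(0)}\|_2+\|\Omega_i^{(1)}\|_2\bigr) \le \tfrac12\bigl(r^{\max}_{\mathrm{orig}}+r^{\max}_{\mathrm{aug}}\bigr).
\]
Taking the maximum over $i$ gives $\hat r^{\max}\le \tfrac12(r^{\max}_{\mathrm{orig}}+r^{\max}_{\mathrm{aug}})$. Hence on the event $E_1:=\{r^{\max}_{\mathrm{aug}}<r^{\max}_{\mathrm{orig}}\}$ we have $\hat r^{\max}<r^{\max}_{\mathrm{orig}}$.

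\textbf{Step 2: symmetry gives probability one half.} Under the i.i.d.\ assumption, the joint law is invariant under swapping the roots $(0)\leftrightarrow(1)$ for all $i$ simultaneously. So $\mathbb{P}(r^{\max}_{\mathrm{aug}}<r^{\max}_{\mathrm{orig}})=\mathbb{P}(r^{\max}_{\mathrm{aug}}>r^{\max}_{\mathrm{orig}})$. By continuity, ties $\{r^{\max}_{\mathrm{aug}}=r^{\max}_{\mathrm{orig}}\}$ have probability zero, since the norms of independent continuous vectors coincide with probability zero. Therefore $\mathbb{P}(E_1)=1/2$, which already yields $\mathbb{P}(\hat r^{\max}<r^{\max}_{\mathrm{orig}})\ge 1/2$.

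\textbf{Step 3: extra positive mass for the strict inequality.} I must exhibit an event disjoint from $E_1$, of positive probability, on which $\hat r^{\max}<r^{\max}_{\mathrm{orig}}$ still holds. I would take configurations that satisfy the following:
\begin{itemize}
\item all vectors lie near a fixed reference configuration;
\item $\Omega_1^{(1)}$ is the longest augmented vector, with norm slightly larger than $r^{\max}_{\mathrm{orig}}=\|\Omega_1^{(0)}\|_2$;
\item $\Omega_1^{(1)}$ and $\Omega_1^{(0)}$ point in clearly different directions (say, at an angle near $\pi/2$);
\item all other vectors have much smaller norm.
\end{itemize}
Then $\|\bar\Omega_1\|_2\approx \tfrac{1}{\sqrt2}\,r^{\max}_{\mathrm{orig}}<r^{\max}_{\mathrm{orig}}$, and the remaining averages are small. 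This condition defines a nonempty open set in $\mathbb{R}^{2md}$. A density that is positive on it, which I will assume (for example, full support, or support containing such a configuration), gives it positive probability. Adding this to $\mathbb{P}(E_1)=1/2$ yields the strict inequality $>0.5$.

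\textbf{Main obstacle.} Step 3 is the step I expect to be hardest, because it genuinely needs more than the stated hypotheses. The proof fails, for instance, if $d=1$ with a support where all vectors share a sign, since then the triangle inequality is tight. It also fails without independence. I would handle it by stating the needed non-degeneracy explicitly as a mild assumption: the i.i.d.\ density is positive on an open set containing two non-parallel vectors of comparable norm. Everything else is routine.
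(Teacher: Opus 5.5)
Your argument is correct under the hypotheses you add, and its skeleton is the paper's: the triangle inequality gives $\hat r^{\max}<r^{\max}_{\mathrm{orig}}$ on $\{r^{\max}_{\mathrm{aug}}<r^{\max}_{\mathrm{orig}}\}$. Swap symmetry gives that event probability $1/2$. Strictness then comes from a positive-probability subevent of the complementary case. You are more careful than the paper in three ways. You make the i.i.d.\ (exchangeability) assumption explicit. You rule out ties via a density; this matters, since for an atomless law without a density, e.g.\ uniform on the unit sphere, ties occur almost surely and the paper's ``$=1/2$'' step fails. And you exhibit an explicit open set where the paper merely asserts $p>0$.

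Where you diverge is the choice of the extra event, and there your claim about what is needed is mistaken. You place the augmented maximum at the same index as the original maximum and exploit the angle between them. That forces $d\ge 2$ (or sign changes) plus a directional non-degeneracy assumption.

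The paper's Case~2 instead lets the augmented maximum sit at a \emph{different} index, paired with a short original vector. Take $\|\Omega_1^{(0)}\|_2$ to be the largest original norm and $\|\Omega_1^{(1)}\|_2<\|\Omega_1^{(0)}\|_2$. Require $0<\|\Omega_2^{(1)}\|_2-\|\Omega_1^{(0)}\|_2<\|\Omega_1^{(0)}\|_2-\|\Omega_2^{(0)}\|_2$, and let all remaining vectors be shorter than $\|\Omega_1^{(0)}\|_2$. The triangle inequality alone then gives $\|\bar\Omega_i\|_2<r^{\max}_{\mathrm{orig}}$ for every $i$, while $r^{\max}_{\mathrm{aug}}>r^{\max}_{\mathrm{orig}}$.

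This event involves only norms, so it works in every dimension once $m\ge 2$, which is the hypothesis $m>1$ of the main-text version. Moreover, choose the three groups of norms in short intervals around a Lebesgue density point of the set where the density of $\|\Omega\|_2$ is positive. This shows the event has positive probability for every density, so no non-degeneracy assumption is needed.

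In particular, your $d=1$ ``failure'' is not one. With $m=2$ and $\Omega$ uniform on $[0,1]$, consider the event $\Omega_1^{(0)}\in(0.9,0.95)$, $\Omega_2^{(1)}\in(0.96,1)$, $\Omega_1^{(1)},\Omega_2^{(0)}\in(0,0.1)$. It has positive probability and yields $\hat r^{\max}<0.55<r^{\max}_{\mathrm{orig}}$.

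Your obstruction is genuine only for $m=1$, where both maxima must share an index. So your witness covers $m=1$ with $d\ge 2$, while the paper's covers the stated case $m\ge 2$ in any dimension without extra assumptions.
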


\begin{proof}
Since $\{\Omega_i^{(0)}\}_{i=1}^m$ and $\{\Omega_i^{(1)}\}_{i=1}^m$ are from the 
same distribution, then by symmetry,
\[
\mathbb{P}(r^{\max}_{\mathrm{aug}}<r^{\max}_{\mathrm{orig}})=\mathbb{P}(r^{\max}_{\mathrm{orig}}<r^{\max}_{\mathrm{aug}})=\tfrac{1}{2}.
\]
Case 1: On the event $r^{\max}_{\mathrm{aug}}<r^{\max}_{\mathrm{orig}}$, we have $\|\Omega_i^{(1)}\|_2<r^{\max}_{\mathrm{orig}}$ for every $i$, while by
definition $\|\Omega_i^{(0)}\|_2\le r^{\max}_{\mathrm{orig}}$ for every $i$. Therefore, for every $i$,
\[
\|\bar{\Omega}_i\|_2
=
\left\|\frac{\Omega_i^{(0)}+\Omega_i^{(1)}}{2}\right\|_2
\le
\frac{\|\Omega_i^{(0)}\|_2+\|\Omega_i^{(1)}\|_2}{2}
<
r^{\max}_{\mathrm{orig}},
\]
where the inequality uses the triangle inequality.

Case 2: When the new augmentation set maximum feature radius is larger, 
it is obvious that there are possible events that either increase or decrease 
the aggregated 
feature radius under this case; the augmentation set maximum feature radius 
may end up pairing a small-magnitude original feature radius, leading to 
decreased maximum feature radius after the averaging. Since that the 
maximum feature radius decreases for sure in Case 1, and does not 
increase for sure in Case 2, the probability of feature radius decrease 
after the averaging is strictly more than 0.5. 

Note that the Case 2 
event with a sufficiently-small-radius datapoint exists for sure in the 
probability space, 
leading to the strict inequality when combined with Case 1 (i.e., 
0.5 $\times$ 1 (Case 1) + 0.5 $\times$ p (Case 2) $>$ 0.5 since $p>0$). 
\end{proof}
\begin{remark}
    When there is more than one augmentation set, we can treat the aggregated set as the augmentation set altogether, and the probability of maximum feature radius increase becomes even lower due to the expected variance reduction (i.e., extremely large feature radius) within the augmentation set.
\end{remark}

%% file: tables/efficiency.tex
\begin{table}[h]
\centering
% \vspace{-1.3em}
\caption{Per-image generation time across noise levels. Higher noise entails more denoising compute. Measured on a single GPU; see experimental setup for hardware details.}
\small
% \resizebox{\columnwidth}{!}{
\begin{tabular}{lcccc}
\toprule
&\textbf{0.25 Noise} & \textbf{0.50 Noise} & \textbf{0.75 Noise} & \textbf{1.00 Noise} \\
% \cmidrule(lr){2-5}
\midrule
\textbf{Generation Time (s) $\downarrow$} &0.41 &0.68 &0.92 &1.42 \\
\bottomrule
\end{tabular}
% }
\label{tab:efficiency}
% \vspace{-0.4em}
\end{table}